\title{Uniform Stability for First-Order Empirical Risk Minimization}
\author{%
    Amit Attia%
    \thanks{\scriptsize Blavatnik School of Computer Science, Tel Aviv University; \texttt{amitattia@mail.tau.ac.il}.}
    \and
    Tomer Koren%
    \thanks{\scriptsize Blavatnik School of Computer Science, Tel Aviv University, and Google Research Tel Aviv; \texttt{tkoren@tauex.tau.ac.il}.}
}
\begin{document}

\maketitle

\begin{abstract}%
We consider the problem of designing uniformly stable first-order optimization algorithms for empirical risk minimization.
Uniform stability is often used to obtain generalization error bounds for optimization algorithms, and we are interested in a general approach to achieve it.
For Euclidean geometry, we suggest a black-box conversion which given a smooth optimization algorithm, produces a uniformly stable version of the algorithm while maintaining its convergence rate up to logarithmic factors.
Using this reduction we obtain a (nearly) optimal algorithm for smooth optimization with convergence rate $\otil(1/T^2)$ and uniform stability $O(T^2/n)$, resolving an open problem of \citet{chen2018stability,attia2021algorithmic}.
For more general geometries, we develop a variant of Mirror Descent for smooth optimization with convergence rate $\otil(1/T)$ and uniform stability $O(T/n)$, leaving open the question of devising a general conversion method as in the Euclidean case.

\end{abstract}

\section{Introduction}

We consider a canonical problem in machine learning: empirical risk minimization using first-order convex optimization.
Given a training sample $S=(z_1,\ldots,z_n)$ of $n$ instances, the goal is to minimize the empirical risk $\emprisk(x) \eqdef \tfrac{1}{n}\sum_{i=1}^n \ell(x;z_i)$ where $\ell(\cdot,z)$ is a convex loss function.
Our focus is on the smooth case which contains a variety of first-order algorithms including gradient descent (GD) and Nesterov's celebrated accelerated gradient method \citep{nesterov1983method}.
In statistical learning, the empirical risk is used as a proxy and our true goal is to minimize the \textit{population risk} $\risk(x) \eqdef \E_{z \in \D}\brk[s]{\ell(x;z)}$ of an unknown distribution $\D$.
The performance of a learning algorithm $\alg$ is evaluated by the expected \textit{excess population risk},
\begin{align*}
    \E_{S}\brk[s]{\risk(A(S)) - \risk(\xstar)},
\end{align*}
where $\xstar \in \argmin \risk(x)$,
which is often bounded by managing the trade-off between 
the expected optimization and generalization errors:
\begin{align*}
    \E_S[\risk(A(S)) - \risk(\xstar)]
    &=
    \E_S[\underbrace{\risk(A(S))-\emprisk(A(S))}_{\text{generalization}}]+\E_S[\underbrace{\emprisk(A(S))-\emprisk(\xstar)}_{\text{optimization}}]
    .
\end{align*}
Thus, for a given optimization method to minimize the empirical risk, we are often interested in bounding the generalization error of its solution.
A fundamental framework for obtaining such bounds is algorithmic stability~\citep{bousquet2002stability,shalev2009stochastic}.

Algorithmic stability has emanated as a central tool for generalization analysis of learning algorithms.
The pioneering work of \citet{bousquet2002stability} introduced the notion of \textit{uniform stability}, arguably the most common notion of algorithmic stability in learning theory.
Essentially, to this day, stability analysis is the only general
approach for obtaining tight, dimension free generalization bounds for convex optimization algorithms applied to the empirical risk (see
\citealp{shalev2009stochastic,feldman2016generalization}).

Although stability analysis has proved to be an effective tool for generalization bounds, it unfortunately applies for specific combinations of algorithms, objectives and geometries.
For example, gradient descent is known to be uniformly stable for smooth objectives in $\ell_2$ geometry~\citep{hardt2016train,feldman2018generalization,chen2018stability}, but the stability analysis fails in other geometries (e.g., $\ell_1$) due to a lack of the contractivity property for gradient steps~\citep{asi2021private}. 
Another example is Nesterov's gradient method, for which uniform stability is quadratic in the number of steps over quadratic objectives \citep{chen2018stability} but for general smooth objectives grows exponentially fast~\citep{attia2021algorithmic}.

The last example is particularly intriguing: as highlighted by \citet{chen2018stability} and \citet{attia2021algorithmic}, it is currently not known whether there exists an optimal method for smooth optimization, with convergence rate $O(1/T^2)$, which also exhibits the optimal uniform stability rate $O(T^2/n)$ for general smooth objectives.  This is an important issue as momentum-based methods, inspired by Nesterov's optimal method, are being extensively used for empirical risk minimization in practice and understanding their stability properties would help in shedding light on the generalization ability of such methods.
Developing an optimal and uniformly stable method (or proving that one does not exist) was thus left as an open problem by \citet{chen2018stability,attia2021algorithmic}.

\subsection{Contributions}

In this paper, motivated by the open question of \citet{chen2018stability,attia2021algorithmic}, we study general techniques for uniformly stable empirical risk optimization with smooth and convex objectives.  First, we focus on the Euclidean case and give a general and widely-applicable technique for converting optimization algorithms to uniformly stable ones.  Then, we move on to develop uniformly stable algorithms for smooth convex optimization in more general normed spaces.

\paragraph{General reduction in Euclidean geometry.}
We provide an algorithm, $\stabregconvex$ (see \cref{alg:l2}), which performs a black-box conversion from a given optimization algorithm for convex, smooth and Lipschitz objectives to a uniformly stable algorithm with nearly the same convergence rate. 
Following is an informal version of our first main result (stated formally in \cref{thm:main_result1}).

\begin{theorem}[informal]
Assume an optimization algorithm $\alg$ with convergence rate $O\brk{T^{-\gamma}}$ over convex, smooth and Lipschitz functions w.r.t.~the Euclidean norm. Then applying $\stabregconvex$ to $\alg$ yields an algorithm with convergence rate $\otil\brk{T^{-\gamma}}$ whose $T$'th iterate is $O\brk{T^\gamma/n}$-uniformly stable.
\end{theorem}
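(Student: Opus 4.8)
\quad The idea I would use is the classical one that adding a strongly convex quadratic to the empirical objective renders its approximate minimizer uniformly stable, combined with the standard decomposition of the excess empirical risk into optimization error and regularization bias; the real work is in arranging for the reduction to reach an optimization error small enough for stability yet cheap enough not to inflate the rate. For a sample $S$ write $\emprisk_S$ for its empirical risk, $\emprisk^\lambda_S(x) \eqdef \emprisk_S(x)+\tfrac{\lambda}{2}\|x-x_0\|^2$ for its $\lambda$-regularization (with $x_0$ the initialization), and $\widehat x_\lambda(S)$ for the unique minimizer of $\emprisk^\lambda_S$. Suppose $\stabregconvex$ returns a point $\bar x(S)$ with $\emprisk^\lambda_S(\bar x(S))-\emprisk^\lambda_S(\widehat x_\lambda(S))\le\epsilon$ (running $\alg$, if needed, over a ball of radius $O(G/\lambda)$ around $x_0$ which contains $\widehat x_\lambda(S)$ and on which $\emprisk^\lambda_S$ remains $O(G)$-Lipschitz and $(\beta+\lambda)$-smooth, so that $\alg$'s guarantee applies). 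Two elementary facts then drive the analysis. First, by the standard stability bound for regularized empirical minimization — using $\lambda$-strong convexity together with the fact that $\emprisk^\lambda_S-\emprisk^\lambda_{S'}$ is $O(G/n)$-Lipschitz for neighboring samples $S,S'$ — we have $\|\widehat x_\lambda(S)-\widehat x_\lambda(S')\|=O(G/(\lambda n))$, while $\lambda$-strong convexity also yields $\|\bar x(S)-\widehat x_\lambda(S)\|\le\sqrt{2\epsilon/\lambda}$; the triangle inequality and $G$-Lipschitzness of the loss then make $\bar x(S)$ uniformly stable with parameter $O\brk{G\sqrt{\epsilon/\lambda}+G^2/(\lambda n)}$. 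Second, the excess empirical risk telescopes as $\emprisk_S(\bar x(S))-\min_x\emprisk_S(x)\le\epsilon+\tfrac{\lambda}{2}\|x^\star_S-x_0\|^2=\epsilon+O(\lambda D^2)$, where $D$ bounds $\|x^\star_S-x_0\|$ for an empirical minimizer $x^\star_S$.

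Calibrating, I would take $\lambda\asymp T^{-\gamma}$, which makes the regularization bias $O(T^{-\gamma})$ and the stability parameter $O\brk{\sqrt{\epsilon T^\gamma}+T^\gamma/n}$, and then choose $\epsilon\asymp\min\{T^{-\gamma},\ T^\gamma/n^2\}$. With these choices the convergence rate is $\epsilon+O(\lambda D^2)=O(T^{-\gamma})$ and the stability parameter is $O(T^\gamma/n)$ (for $T=O(n^{1/\gamma})$ the square-root term is $\Theta(T^\gamma/n)$; otherwise it is $O(1)$, which is itself $O(T^\gamma/n)$ in that range). The catch is that the accuracy we need, $\epsilon\asymp\lambda\cdot(T^\gamma/n)^2\asymp T^\gamma/n^2$, lies far \emph{below} the regularization strength $\lambda\asymp T^{-\gamma}$: it does not suffice for $\alg$ to reach accuracy merely on the order of its own convergence rate. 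Thus a single run of $\alg$ for $T$ steps, which only gives $\epsilon=O(T^{-\gamma})\asymp\lambda$, is inadequate — the $\sqrt{\epsilon/\lambda}$ term would then be a constant.

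The remedy, which I expect to be the technical crux, is a restart scheme converting $\alg$'s \emph{convex}-case rate into geometric convergence on the strongly convex objective $\emprisk^\lambda_S$. By $\alg$'s guarantee (carrying whatever smoothness- and distance-dependent constants it has), running $\alg$ from a point at distance $R$ of $\widehat x_\lambda(S)$ for $N_1=O\brk[s]{(\beta/\lambda)^{1/\gamma}}$ steps brings the function-value gap below $\tfrac{\lambda}{8}R^2$, whereupon $\lambda$-strong convexity places the new iterate within $R/2$ of $\widehat x_\lambda(S)$. Crucially $N_1$ is independent of $R$, so repeating this halves the distance — hence quarters the function-value gap — in each epoch, and $O(\log(nT))=\otil(1)$ restarts therefore reach accuracy $\epsilon$, for a total of $\otil(1)\cdot N_1=\otil\brk[s]{(\beta/\lambda)^{1/\gamma}}=\otil(T)$ gradient queries when $\lambda\asymp T^{-\gamma}$ — which is the sense in which the converted algorithm has convergence rate $\otil(T^{-\gamma})$. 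Assembling the three ingredients — uniform stability $O(G\sqrt{\epsilon/\lambda}+G^2/(\lambda n))=O(T^\gamma/n)$ of the returned iterate, excess empirical risk $\epsilon+O(\lambda D^2)=\otil(T^{-\gamma})$, and $\otil(T)$ total queries — completes the proof. I expect the main obstacle to be precisely the restart analysis: establishing that the convex-case guarantee of the \emph{black-box} $\alg$, together with nothing beyond strong convexity of the added quadratic, produces geometric decrease with the correct inner-loop length $N_1=O((\beta/\lambda)^{1/\gamma})=O(T)$, so that driving the optimization error well below $\lambda$ (as uniform stability requires) costs only a logarithmic overhead in iterations; the remaining work is bookkeeping — checking that the choices of $\lambda$ and $\epsilon$ simultaneously keep the regularization bias at $O(T^{-\gamma})$, the term $\sqrt{\epsilon/\lambda}$ at $O(T^\gamma/n)$, and the restart count logarithmic.
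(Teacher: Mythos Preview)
Your plan is correct and rests on the same two pillars as the paper: (i) the $\lambda$-regularized empirical minimizer is $O(G^2/(\lambda n))$-uniformly stable, and (ii) the black-box convex guarantee of $\alg$ combined with strong convexity of the quadratic yields a restart scheme with inner-loop length $O((\beta/\lambda)^{1/\gamma})$ that converges geometrically, so that driving the optimization error far below $\lambda$ costs only a logarithmic number of restarts. Your calibration $\lambda\asymp T^{-\gamma}$, $\epsilon\asymp\min\{T^{-\gamma},T^\gamma/n^2\}$ is exactly the right one.

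The route differs from the paper's in one structural respect. You fix a single regularization level $\lambda$ determined by the horizon $T$ and run the restart loop to accuracy $\epsilon$ on that one objective. The paper instead runs an \emph{epoch schedule} $\sc_k=\sm/2^{k+2}$, converging (via restarts) to the minimizer of each successively weaker regularization and warm-starting the next epoch from the last; this produces iterates $x_t$ that satisfy the stated convergence/stability trade-off \emph{for every $t$}, i.e.\ an anytime algorithm. The cost of that generality is an additional geometric argument (their Lemmas~2 and~5) bounding the distance between the warm-start $y_k$ and the new regularized minimizer $x^*_k$ across epochs, which your single-$\lambda$ approach sidesteps entirely since you always start from $x_0$. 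So: your proof is shorter and matches the informal statement as phrased (``$T$'th iterate''); the paper's is slightly more involved but delivers the stronger uniform-in-$t$ guarantee of the formal theorem.
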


Thus, the conversion preserves the rate of $\alg$ (up to logarithmic factors) and exhibits essentially the best convergence vs.\ stability trade-off one could hope for: indeed, any improvement to one of the rates (without compromising the other) would lead to a contradiction to statistical lower bounds (this is discussed is detail by \citealp{chen2018stability}).%
\footnote{We remark that the setting of \citet{chen2018stability} did not include a Lipschitz assumption (in addition to smoothness), as they discuss in their Section 4.2. That said, a straightforward modification of the proof of Theorem 7 in \citet{chen2018stability} can accommodate the Lipschitz assumption by a simple scaling of the loss function.}
Applying this result to Nesterov's accelerated gradient method, we obtain an algorithm with (nearly) \emph{optimal} convergence rate $\otil(1/T^2)$ and a matching \emph{optimal} stability rate of $O(T^2/n)$, resolving an open problem posed by \cite{chen2018stability} and reiterated by \cite{agarwalAKTZ20} and \citet{attia2021algorithmic}.

Our conversion procedure is based on two simple observations.
The first is that the minimizer of a strongly convex objective is uniformly stable.
The second is that with smoothness, converging very close to the minimizer of a regularized objective comes almost for free (at the cost of only a logarithmic factor) since strong convexity and smoothness together allow for linear rates of convergence.
Carefully combining the two leads to a simple yet effective way to achieve stability in smooth convex optimization with a minimal degradation in convergence rate.

\paragraph{Stable Mirror Descent for general norms.}

We move on to address uniform stability in more general normed spaces.
A general approach to optimization with general norms is the so called Mirror Descent \citep{nemirovskij1983problem} which has convergence rate $O(1/T)$ for smooth objectives (e.g.,~\citealp{bubeck2015convex}) and accelerated variants with rate $O(1/T^2)$ \citep{tseng2008accelerated,allen2017linear}.
A natural followup question to our investigation in the Euclidean case is whether there exists a variant of (accelerated) Mirror Descent with a similar convergence rate, which is also uniformly stable.

This general scenario poses additional challenges, as even for simple non-accelerated Mirror Descent, previous work by \citet{asi2021private} gave an indication that a standard Mirror Descent gradient step fails to be contractive (in fact, it is slightly expansive). This questions the primary approach for proving stability of iterative methods in the context of Mirror Descent.

As it turns out, a general conversion scheme as the one we use in \cref{alg:l2} does not easily extends to general geometries. 
In a nutshell, the issue is the following:
given a black-box algorithm for smooth and convex optimization, the standard reduction for obtaining a linear rate assuming the function is also strongly convex is based on a contraction argument relating the convergence rate upper bound to the squared distance from the minimizer, via strong convexity; for general norms, however, the convergence (e.g., of Mirror Descent) depends in general on the Bregman divergence rather than the squared distance and the argument does not go through.

Instead, we devise a specialized algorithm called $\stabregrel$ (\cref{alg:stable_md}), which obtains the following result (stated formally at \cref{thm:main_result2}), and leave the problem of designing a generic conversion method for general geometries as an open question.

\begin{theorem}[informal]
    Assume a convex loss function $\ell$, which is smooth and Lipschitz (w.r.t.~a norm  $\norm{\cdot}$) and a $1$-strongly convex regularization $R$ (w.r.t.~the same norm).
    Then $\stabregrel$ applied to the empirical risk with regularizer $R$ has convergence rate $\otil\brk{1/T}$ and its output after $T$ steps is $O\brk{\ifrac{T}{n}}$-uniformly stable.
\end{theorem}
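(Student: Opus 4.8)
I would realize $\stabregrel$ as a regularized Mirror Descent: run Mirror Descent with mirror map $R$ on the regularized empirical risk $F_\lambda \eqdef \emprisk + \lambda R$ for $T$ steps, starting from $x_0 = \argmin R$, and output the last iterate $x_T$; the regularization strength is set to $\lambda = \Theta\brk{\beta\log(nT)/T}$, where $\beta$ is the smoothness constant of $\ell$ and a bound $D$ on the range of $R$ over the domain is absorbed into the logarithm. The plan then follows the two-ingredient blueprint of the Euclidean reduction. The first ingredient: the exact minimizer $x_\lambda^\star \eqdef \argmin F_\lambda$ is uniformly stable. For neighboring samples $S,S'$ the two regularized objectives differ by a $\tfrac{2G}{n}$-Lipschitz (w.r.t.\ $\norm{\cdot}$) function, and $F_\lambda$ is $\lambda$-strongly convex w.r.t.\ $\norm{\cdot}$ (since $R$ is $1$-strongly convex w.r.t.\ $\norm{\cdot}$ and $\emprisk$ is convex), so the textbook strong-convexity argument gives $\norm{x_\lambda^\star(S)-x_\lambda^\star(S')} \le \tfrac{2G}{\lambda n}$, hence $|\ell(x_\lambda^\star(S);z)-\ell(x_\lambda^\star(S');z)| \le \tfrac{2G^2}{\lambda n}$ for every $z$ by $G$-Lipschitzness. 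The second ingredient (optimization side): writing $\emprisk(x_T)-\min\emprisk \le \brk{F_\lambda(x_T)-F_\lambda(x_\lambda^\star)} + \lambda\brk{R(x^\star)-R(x_T)}$, the regularization bias is at most $\lambda D = \otil(1/T)$, so it remains to drive the first term below $\tfrac{G^2}{2\lambda^2 n^2}$ within the budget of $T$ iterations.

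Driving down that first term is the crux, and it is exactly the place where the Euclidean recipe does not transfer: there one turns a black-box smooth algorithm into a linearly convergent one by contracting the suboptimality bound like the squared distance to the minimizer, but for general norms the controlling quantity is the Bregman divergence $D_R$ rather than $\tfrac12\norm{\cdot}^2$. My resolution is to work with \emph{relative} smoothness and strong convexity with respect to $R$. Because $\ell$ is $\beta$-smooth and $R$ is $1$-strongly convex w.r.t.\ the same norm, $D_{\emprisk}(x,y)\le\tfrac\beta2\norm{x-y}^2\le\beta D_R(x,y)$, so $\emprisk$ is $\beta$-smooth relative to $R$; hence $F_\lambda$ is $(\beta+\lambda)$-smooth relative to $R$ and, since $\emprisk$ is convex, also $\lambda$-strongly convex relative to $R$. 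A standard three-point Bregman computation then yields, for Mirror Descent with step size $1/(\beta+\lambda)$, the contraction $D_R(x_\lambda^\star,x_{t+1})\le\brk{\tfrac\beta{\beta+\lambda}}D_R(x_\lambda^\star,x_t)$ together with $F_\lambda(x_t)-F_\lambda(x_\lambda^\star)\le\beta\brk{\tfrac\beta{\beta+\lambda}}^{t-1}D_R(x_\lambda^\star,x_0)$, i.e.\ a linear rate. With $\lambda=\Theta\brk{\beta\log(nT)/T}$ this makes $\brk{\tfrac\beta{\beta+\lambda}}^T$ polynomially small in $nT$ (once $T=\Omega(\log(nT))$), which simultaneously makes the first term above negligible and --- using $1$-strong convexity of $R$ once more, now as $\tfrac12\norm{x_T-x_\lambda^\star}^2\le D_R(x_\lambda^\star,x_T)$ --- gives $\norm{x_T-x_\lambda^\star}\le\tfrac{G}{\lambda n}$.

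To finish, I would combine the two ingredients through the triangle inequality and $G$-Lipschitzness of $\ell$: for neighboring $S,S'$ and any $z$,
\begin{align*}
|\ell(x_T(S);z)-\ell(x_T(S');z)|
&\le G\norm{x_T(S)-x_\lambda^\star(S)} + |\ell(x_\lambda^\star(S);z)-\ell(x_\lambda^\star(S');z)| + G\norm{x_T(S')-x_\lambda^\star(S')} \\
&\le \frac{4G^2}{\lambda n},
\end{align*}
which is $O(T/n)$; and the optimization bound collected above is $\otil(1/T)$. The step I expect to be the main obstacle is the linear convergence claim: unlike in the Euclidean case it cannot be extracted from a generic smooth optimizer via a squared-distance contraction, and making it work forces the relative-smoothness / relative-strong-convexity analysis --- which is precisely why $\stabregrel$ has to be a concrete Mirror Descent scheme and why a black-box conversion for general geometries remains open.
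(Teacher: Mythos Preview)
Your proposal is correct and follows essentially the same route as the paper: regularize the empirical risk by $\lambda R$, observe that the regularized objective is $(\beta+\lambda)$-relatively smooth and $\lambda$-relatively strongly convex with respect to $R$ (the paper's \cref{lemma:reg-rel-sc-sm}), invoke the linear Bregman contraction of Mirror Descent in the relatively smooth setting (the paper's \cref{lemma:rel_mirror}, from \citet{lu2018relatively}), and then combine stability of the regularized minimizer with $\norm{x_T-x_\lambda^\star}$-closeness via the triangle inequality; the choice $\lambda=\Theta(\beta\log(nT)/T)$ and the convergence decomposition you write are exactly the paper's. Even your diagnosis of why this must be a bespoke Mirror Descent rather than a black-box reduction matches the paper's discussion.
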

Hence, $\stabregrel$ has the desired uniform stability of $O(T/n)$ and has nearly the same rate as Mirror Descent.
The algorithm is also based on the simple approach of optimizing a regularized objective; however,
as regularization in general norms can impair smoothness (e.g., $\tfrac{1}{2}\pnorm{\cdot}^2$ for $1<p<2$ is not smooth, see \cref{sec:non_smooth_regulator}), our analysis is based on relative smooth convex optimization \citep{lu2018relatively} (the regularization is smooth w.r.t.~itself), exploiting the linear rate to converge to a stable minimizer.

\paragraph{Examples.} We discuss two example applications of our general algorithm.

\begin{itemize}[leftmargin=*]
\item \emph{$\ell_p$ geometry $(1 < p \leq 2)$}:
In the case where
$\xdomain$ is the $\ell_p$ unit ball
and $\ell(\cdot,z)$ is convex, smooth and Lipschitz w.r.t.~$\ell_p$ norm for all $z$,
applying \cref{thm:main_result2} with the mirror map $\reg(x) = \ifrac{\pnorm{x}^2}{2(p-1)}$,
$\stabregrel$
has convergence rate
$\otil\brk*{\ifrac{1}{(p-1)T}}$,
and is $O\brk*{\ifrac{T}{n}}$-uniformly stable.

\item \emph{$\ell_1$ geometry}:
In the case where $\xdomain=\brk[c]{x \in \R_{\geq 0}^d : \norm{x}_1=1}$ and $\ell(\cdot,z)$ is convex, smooth and Lipschitz w.r.t.~$\ell_1$ norm for all $z$,
applying \cref{thm:main_result2} with negative entropy as the mirror map $\reg(x) = \sum_{i=1}^d x_i \log x_i$, the convergence rate is
$\otil\brk*{\ifrac{\log(d)}{T}}$,%
\footnote{Throughout, logarithmic factor of the space dimension are \emph{not} suppressed by the $\otil$ notation.}
and the algorithm is $O\brk*{\ifrac{T}{n}}$-uniformly stable.
\end{itemize}

\paragraph{Open problems.}
A couple of interesting questions remain open for investigation.
The first is whether we can remove the additional $\log$ factors from both the Euclidean and general geometry methods, thus obtaining stability ``for free.''
The second is whether we can devise a general conversion method, analogous to the one we developed in the Euclidean case, that would apply to more general norms.

\subsection{Related work}

Classical generalization theory appealed to uniform convergence of the empirical risk to the population risk.
Without further assumptions on convex functions, the rate of uniform convergence for stochastic convex optimization is dimension-dependent and lower bounded by $\Omega(\sqrt{d/n})$ \citep{shalev2010learnability,feldman2016generalization}.
Using a stability analysis, recent progress was made on stochastic optimization generalization bounds of convex risk minimizers, starting with the influential work of \citet{bousquet2002stability} and \citet{shalev2009stochastic}.
A variety of notions of algorithmic stability  exists in the literature and differ in the distance measure and aggregation of multiple changes
\citep{bousquet2002stability,mukherjee2006learning,shalev2010learnability,london2017pac,lei2020fine}.
Data dependent generalization bounds based on stability arguments were also studied by \citet{maurer2017second,kuzborskij2018data}.
Recent work derived tighter bounds of generalization from stability
\citep{feldman2018generalization,feldman2019high,bousquet2020sharper,klochkov2021stability},
and the approach of stability analysis has been influential in a variety of settings (e.g., \citealp{koren2015fast,gonen2017fast,charles2018stability}).

Significant interest in the stability properties of iterative methods has arisen recently.
The work of \citet{hardt2016train} gave the first bounds on the uniform stability of stochastic gradient descent (SGD) for convex and smooth optimization, showing it grows linearly with the number of optimization steps.
Their result apply with minor modification to full-batch gradient descent (GD) as seen in following work \citep{feldman2018generalization,chen2018stability}.
Bounds for the stability of SGD and GD in the non-smooth case was studied by \citet{lei2020fine,bassily2020stability} and revealed a significant gap in stability between smooth and non-smooth optimization, indicating the importance of smoothness for stability.
Furthermore, algorithmic stability was instrumental in stochastic mini-batched iterative optimization (e.g.,
\citealp{wang2017memory,agarwalAKTZ20}), and has been pivotal to the
design and analysis of differentially private optimization algorithms
\citep{wu2017bolt,bassily2019private,feldman2020private}, both of which focused
mainly on smooth optimization.

\section{Preliminaries}
\subsection{Smooth convex optimization}
In this work we are interested in optimization of convex and smooth functions over a closed convex set $\xdomain \subseteq \R^d$.
A function $f$ is said to be $\sm$-smooth w.r.t.~a norm $\norm{\cdot}$ if its gradient is $\sm$-Lipschitz w.r.t.~$\norm{\cdot}$, namely $\dnorm{\nabla f(x)-\nabla f(y)} \leq \sm \norm{x-y}$ for all $x,y \in \xdomain$.
Here $\dnorm{\cdot}$ is the dual norm of $\norm{\cdot}$.
This smoothness condition also yields the following quadratic bound for all $x,y \in \xdomain$: $f(y) \leq f(x) + \nabla f(x)\cdot(y-x)+\frac{\sm}{2}\norm{y-x}^2$.
A function $f$ is said to be $\sc$-strongly convex w.r.t.~a norm $\norm{\cdot}$ if for all $x,y \in \xdomain$, we have $f(y) \geq f(x) + \nabla f(x) \cdot (y-x) + \frac{\sc}{2}\norm{y-x}^2$.

\subsection{Algorithmic stability}

In this work we consider the well known \textit{uniform stability} \citep{bousquet2002stability} in the following general setting of supervised learning.
There is an unknown distribution $\D$ over a sample set $\Z$ from which examples are drawn.
Given a training set $S=(z_1,\dots,z_n)$ of $n$ samples drawn i.i.d.~from $\D$, the objective is finding a model $x \in \xdomain$ with a small \emph{population risk}:
\begin{align*}
    \risk(x) 
    \eqdef 
    \E_{z \sim \D} [ \ell(x;z) ]
    ,
\end{align*}
where $\ell(x;z)$ is the loss of the model described by $x$ on an example $z$.
We cannot evaluate the population risk directly, thus, optimization will be applied on the \emph{empirical risk} with respect to the sample $S$, given by
\begin{align*}
    \emprisk(x)
    \eqdef 
    \frac{1}{n} \sum_{i=1}^{n} \ell(x;z_i)
    .
\end{align*}
We use the following notion of \textit{uniform stability}.%
\footnote{The definition is suitable for deterministic algorithms and is sufficient for the scope of this work. A more general definition exists for randomized algorithms ~\citep[e.g.,][]{hardt2016train,feldman2018generalization}.}
\begin{definition}[uniform stability] \label{def:uni_stab}
Algorithm $A$ is $\epsilon$-uniformly stable if for all $S, S' \in \Z^n$ such that $S, S'$ differ in at most one example, the corresponding outputs $A(S)$ and $A(S')$ satisfy
\begin{align*}
    \sup_{z \in \Z} ~ \abs{\ell(A(S);z)-\ell(A(S');z)} 
    \leq 
    \epsilon.
\end{align*}
\end{definition}
A known result of \citet{bousquet2002stability} is a bound on the expected generalization error of an $\epsilon$-uniformly stable algorithm $\alg$,
\begin{align*}
    \E_S[\risk(\alg(S))-\emprisk(\alg(S))] \leq \epsilon.
\end{align*}

\subsection{Relatively-smooth convex optimization}
In order to handle regularization over general norms we will need the framework of relatively smooth and convex functions.
Given a $1$-strongly convex function $\reg: \xdomain \mapsto \R$, the Bregman divergence of $\reg$ is defined as $\breg(y,x) \eqdef \reg(y)-\reg(x)-\nabla \reg(x) \cdot (y-x)$. With the definition of Bregman divergence we can define relative smooth and strong convexity.
\begin{definition}[relative strong convexity]
    A function $f: \xdomain \mapsto \R$ is $\sc$-strongly convex
    relative to $\reg: \xdomain \mapsto \R$ if for any $x,y \in \xdomain$,
      $  f(x) + \nabla f(x) \cdot (y-x) + \sc \breg(y,x) \leq f(y)$.
\end{definition}
\begin{definition}[relative smoothness]
    A function $f: \xdomain \mapsto \R$ is $\sm$-smooth
    relative to $\reg: \xdomain \mapsto \R$ if for any $x,y \in \xdomain$,
      $  f(x) + \nabla f(x) \cdot (y-x) + \sm \breg(y,x) \geq f(y)$.
\end{definition}
Our analysis relates to the so called Mirror Descent method which is used in both smooth and relatively-smooth convex optimization \citep{nemirovskij1983problem,lu2018relatively}.
The standard step of Mirror Descent over $\sm$-smooth (or relatively smooth) functions with a $1$-strongly convex regularization $\reg(\cdot)$ is as follows:
\begin{align} \label{eq:mirror_step}
    x_{t+1} 
    = 
    \argmin_{x \in \xdomain} \brk[c]!{ \nabla f(x_t) \cdot (x-x_t) + \sm \breg (x,x_t) }
    . 
\end{align}

\section{General reduction in Euclidean geometry} \label{sec:l2}
In this section, we present a general method for converting a black-box smooth optimization algorithm with a given convergence guarantee to a uniformly stable algorithm with nearly the same convergence rate.
Let $\alg$ be an iterative algorithm for minimization of $\sm$-smooth convex functions with convergence rate of the form
\begin{align*}
    f(x_t)-f(\xstar) \leq \frac{C \sm \norm{x_0-\xstar}^2}{t^\gamma},
\end{align*}
where $\xstar \in \argmin_{x \in \xdomain} f(x)$, $x_t=\alg(f,\sm,x_0,t)$ is the output of the algorithm at step $t$ on a $\sm$-smooth function $f$ initialized at $x_0$, and constants $C,\gamma>0$.  Note that it is enough to consider $0 < \gamma \leq 2$, as $O(1/t^2)$ is the optimal convergence rate for smooth optimization.

\begin{algorithm2e}[h]
    \SetAlgoLined
    \DontPrintSemicolon
    \KwIn{Sample $S=\brk[b]{z_i}_{i=1}^n$, base algorithm $\alg$, initialization $x_0$, parameters $\sm$,$\lip$,$\diam$}
    \KwOut{A sequence of iterates $\brk[c]{x_t}$.}
    $\sc_0 \gets \frac{\sm}{4} ; y_0 \gets x_0 ; t_0 \gets 0.$\;
    \For{$k \gets 0,1,\dots$}{
        $\nepochs_k \gets \max \brk[c]!{1,2 \log_2\brk{\ifrac{ \sc_k D n}{G}}} ; \thalf{k} \gets \brk1{4 C \brk{1+\ifrac{\sm}{\sc_k}}}^{1/\gamma} ; t_{k+1} \gets t_k + \thalf{k} \nepochs_k$.\;
        $x_{t_k},x_{t_k+1},\dots,x_{t_{k+1} - 1} \gets y_{k} $.\;
        $y_{k,0} \gets y_{k}$.\;
        \For{$j \gets 0$ \KwTo $\nepochs_k-1$}{ 
            $y_{k,j+1} \gets \alg(\emprisk(x) + \frac{\sc_k}{2}\norm{x-x_0}^2,\sc_k+\sm,y_{k,j},\thalf{k})$.
        }
        $y_{k+1} \gets y_{k,\nepochs_k};$
        $\sc_{k+1} \gets \sc_{k}/2$.\;
    }
    \caption{$\stabregconvex(\alg)$: Uniformly Stable Optimization in L2} \label{alg:l2}
\end{algorithm2e}

Our general black-box conversion procedure, we name $\stabregconvex$, is presented in \cref{alg:l2}.
It is based on running the base algorithm $\alg$ in epochs, where in each epoch we optimize a regularized version of the empirical risk,  $\regemprisk{k}(x) = \emprisk(x) + \ifrac{\sc_k \norm{x-x_0}^2}{2}$.
To do so, we invoke $\alg$ for $\nepochs_k$ times, each time halving the distance to the regularized minimizer by performing $\thalf{k}$ steps.
Between epochs, we halve the regularization magnitude.

The basic idea behind \stabregconvex is simple: in each epoch the algorithm converges close to the minimizer of the regularized objective $\regemprisk{k}(x)$, which is a stable function of the sample thanks to the added strongly convex regularization. The convergence to the minimizer can be exponentially fast since $\regemprisk{k}(x)$ is strongly convex and smooth, thus the output of each epoch is also stable.
Following each epoch we decrease the regularization, thus converging closer and closer to the actual minimizer of the empirical (unregularized) risk while still maintaining uniform stability.

Note that if $\alg$ is a first-order algorithm that uses only gradient access to the empirical risk (such as GD or NAG), then $\stabregconvex(\alg)$ can also be implemented using first-order access, as we only add a simple L2 regularization term to the empirical risk whose gradient is easy to compute.

The main result of this section is the following theorem which establishes convergence and stability guarantees for $\stabregconvex$.

\begin{theorem} \label{thm:main_result1}
    Assume $\ell(\cdot,z)$ is convex, $\sm$-smooth and $\lip$-Lipschitz w.r.t.~$\norm{\cdot}_2$ on $\xdomain$.
    Let $\alg$ be an optimization algorithm with convergence rate $\ifrac{C \sm \norm{x_0-\xstar}^2}{t^\gamma}$.
    Then the iterates $\{x_t\}_t$ produced by $\stabregconvex(\alg)$ initialized at $x_0$ such that $\diam \geq \norm{x_0 - \xstar}$ satisfy the following, for all $t$:
    \begin{enumerate}[label=(\roman*)]
        \item
        $x_t$ is $O\brk{\ifrac{\lip^2 t^\gamma}{C \sm n } }$-uniformly stable.
        \item
        $
            \emprisk(x_t)-\emprisk(\xstar) 
            = 
            \otil\brk{\ifrac{C\sm D^2}{t^\gamma}}
            .
        $
    \end{enumerate}
\end{theorem}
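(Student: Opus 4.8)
The plan is to analyze the algorithm epoch-by-epoch, tracking two invariants: (a) the distance from $y_k$ (the output of epoch $k$) to the minimizer $x^\star_k$ of the regularized objective $F_k^S(x) = \widehat{F}_S(x) + \tfrac{\sigma_k}{2}\norm{x-x_0}^2$, and (b) the uniform stability of $y_k$ as a function of the sample. For the convergence claim (ii), I would first observe that $F_k^S$ is $\sigma_k$-strongly convex and $(\sigma_k+\beta)$-smooth, so its condition number is $1+\beta/\sigma_k$; consequently, one invocation of $\alg$ for $\tau_k = (4C(1+\beta/\sigma_k))^{1/\gamma}$ steps, started from a point at distance $\rho$ from $x^\star_k$, reaches error at most $\tfrac{C(\sigma_k+\beta)\rho^2}{\tau_k^\gamma} \le \tfrac{\sigma_k \rho^2}{4}$, which by strong convexity means the new point is within distance $\rho/\sqrt{2}$ of $x^\star_k$ — i.e., each call shrinks the distance by a constant factor. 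Running $N_k$ such calls drives the distance down geometrically, and choosing $N_k \approx 2\log_2(\sigma_k D n / G)$ makes the residual distance to $x^\star_k$ smaller than roughly $G/(\sigma_k n)$ (with the appropriate power), which will be negligible at the scale of the other error terms.

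Next I would relate the regularized minimizers across epochs and to the true minimizer $x^\star$. Since $\sigma_k = \tfrac{\beta}{4} 2^{-k}$ decreases geometrically, the minimizer $x^\star_k$ of $F_k^S$ drifts toward $\widehat{x}^\star$; quantitatively, $\norm{x^\star_k - x^\star_{k+1}}$ and $\norm{x^\star_k - \widehat{x}^\star}$ can be bounded using strong convexity of the regularizer together with the fact that $x^\star_k$ minimizes a perturbation of $\widehat{F}_S$ of magnitude $\sigma_k$ — concretely $\norm{x^\star_k - \widehat{x}^\star} \le \norm{x_0 - \widehat{x}^\star} \le D$, and more refined estimates on the decrement of $\widehat{F}_S(x^\star_k) - \widehat{F}_S(\widehat{x}^\star) \le \tfrac{\sigma_k}{2}\norm{x_0 - \widehat{x}^\star}^2 \le \tfrac{\sigma_k D^2}{2}$. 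Combining: after epoch $k$, $\widehat{F}_S(y_k) - \widehat{F}_S(\widehat{x}^\star)$ is at most the regularization gap $O(\sigma_k D^2)$ plus the (tiny) optimization residual plus a Lipschitz term $G \cdot \mathrm{dist}(y_k, x^\star_k)$. Since $t_{k+1} = t_k + \tau_k N_k$ and $\tau_k \approx (4C\beta/\sigma_k)^{1/\gamma}$ with $\sigma_k$ halving, we have $t_k = \Theta(\tau_k N_k) = \tilde\Theta((C\beta/\sigma_k)^{1/\gamma})$, i.e. $\sigma_k = \tilde\Theta(C\beta/t_k^\gamma)$, so the error $O(\sigma_k D^2) = \tilde{O}(C\beta D^2 / t^\gamma)$, which is exactly claim (ii). (For indices $t$ strictly inside an epoch, $x_t = y_k$, so the bound from the previous epoch's output applies.)

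For the stability claim (i), the key lemma is that the exact minimizer $x^\star_k$ of the $\sigma_k$-strongly-convex, $G$-Lipschitz-loss regularized empirical risk is $O(G^2/(\sigma_k n))$-uniformly stable: if $S,S'$ differ in one sample, then $F_k^S$ and $F_k^{S'}$ differ by $\tfrac1n(\ell(\cdot;z_i)-\ell(\cdot;z_i'))$ which is $\tfrac{2G}{n}$-Lipschitz, and a standard argument (strong convexity of one objective at its minimizer, evaluated at the other's minimizer) gives $\norm{x^\star_k - x'^\star_k} \le \tfrac{2G}{\sigma_k n}$, hence an $O(G^2/(\sigma_k n))$ bound on the loss difference via $G$-Lipschitzness of $\ell$. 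The remaining work is to propagate this to $y_k$: the algorithm does not reach $x^\star_k$ exactly, so I bound $\norm{y_k - y'_k}$ by $\norm{x^\star_k - x'^\star_k}$ plus the residual distances $\mathrm{dist}(y_k,x^\star_k) + \mathrm{dist}(y'_k,x'^\star_k)$, the latter made smaller than $G/(\sigma_k n)$ by the choice of $N_k$. This gives $\norm{y_k - y'_k} = O(G/(\sigma_k n))$ and thus $y_k$ is $O(G^2/(\sigma_k n))$-stable; substituting $\sigma_k = \tilde\Theta(C\beta/t^\gamma)$ yields the claimed $O(G^2 t^\gamma/(C\beta n))$.

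The main obstacle I anticipate is the stability propagation through the inner loop: the iterates $y_{k,j}$ are produced by the black-box $\alg$, about which we assume nothing except a convergence rate — in particular not that $\alg$ itself is stable or non-expansive. The trick is that we never need stability of the trajectory; we only use that \emph{both} runs (on $S$ and on $S'$) are driven geometrically close to their \emph{respective} regularized minimizers $x^\star_k, x'^\star_k$, and those minimizers are provably close by the strong-convexity argument. So stability of the output is obtained purely from (stability of the minimizer) $+$ (two one-sided optimization-accuracy bounds), with the triangle inequality gluing them — no contractivity of $\alg$ required. A secondary nuisance is bookkeeping the $\max\{1,\cdot\}$ and $\max\{1,\cdot\}$ guards in $N_k$ and the case $\sigma_k D n / G < 1$, and making sure the logarithmic factors are collected correctly into the $\tilde O(\cdot)$; these are routine but need care.
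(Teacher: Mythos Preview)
Your proposal is correct and follows essentially the same route as the paper: geometric contraction to the regularized minimizer $x^\star_k$ within each epoch, stability of that minimizer via strong convexity of the regularized objective, the triangle-inequality glue you describe to transfer stability from $x^\star_k$ to $y_k$, and the $\sigma_k \leftrightarrow t^\gamma$ bookkeeping at the end. The one step you leave implicit is the warm-start invariant---that $\norm{y_k - x^\star_k}$ remains $O(D)$ across the epoch transition when $\sigma_k$ halves and the target shifts from $x^\star_{k-1}$ to $x^\star_k$---which the paper isolates as a separate lemma and closes inductively using a sharp bound on $\norm{x^\star_{k-1}-x^\star_k}$; you mention bounding this drift but do not explicitly chain it with $\norm{y_k - x^\star_{k-1}}$ to close the loop, and you will need this to justify that the residual $\norm{y_{k+1}-x^\star_k}$ is indeed $O(G/(\sigma_k n))$ starting from an $O(D)$ initial distance.
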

Following are lemmas needed to prove \cref{thm:main_result1}.
In the lemmas we will refer to the values computed in \stabregconvex (such as $y_k$, $x_t$, etc.) for an arbitrary input. We will use the notation $\regemprisk{k}(x) = \emprisk(x) + \frac{\sc_k}{2}\norm{x-x_0}^2$ for the regularized objective we optimize at iteration $k$ of the algorithm, and $\xstar_k$ to be its minimizer.

We begin with a technical lemma that bounds the distance between the minimizers of two regularized objectives.

\begin{lemma} \label{claim:reg_minima_distance}
    Let $f(x)$ a convex function. Let $\xstar_1$ and $\xstar_2$ be the minimizers of $f(x)+\frac{\sc_1}{2}\norm{x-x_0}^2$ and $f(x)+\frac{\sc_2}{2}\norm{x-x_0}^2$ respectively, for some $x_0 \in \R^d$, $\sc_1 > 0$ and $\sc_2 \geq 0$. Then,
    \begin{align*}
        \norm{\xstar_1-\xstar_2}^2 \leq \frac{\sc_1-\sc_2}{\sc_1+\sc_2} \brk*{\norm{\xstar_2-x_0}^2-\norm{\xstar_1-x_0}^2}.
    \end{align*}
\end{lemma}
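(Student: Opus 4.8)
The plan is to use the first-order optimality conditions for the two regularized minimizers. Since $f$ is convex, $f(x)+\tfrac{\sc_i}{2}\norm{x-x_0}^2$ is strongly convex (for $i=1$; merely convex for $i=2$ if $\sc_2=0$), so each minimizer is characterized by a variational inequality. Concretely, writing $g_i(x) = f(x) + \tfrac{\sc_i}{2}\norm{x-x_0}^2$, optimality of $\xstar_i$ over $\xdomain$ gives $\nabla g_i(\xstar_i)\cdot(x - \xstar_i) \geq 0$ for all $x \in \xdomain$, i.e.\ there is a subgradient $u_i \in \partial f(\xstar_i)$ with $\brk{u_i + \sc_i(\xstar_i - x_0)}\cdot(x-\xstar_i)\geq 0$. (If $\xdomain = \R^d$ this is just $u_i = -\sc_i(\xstar_i-x_0)$.)

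Next I would plug $x=\xstar_2$ into the inequality for $\xstar_1$ and $x=\xstar_1$ into the inequality for $\xstar_2$, and add them. The subgradient terms combine into $(u_1-u_2)\cdot(\xstar_2-\xstar_1)\leq 0$ by monotonicity of the subdifferential of the convex function $f$, so they can be dropped, leaving an inequality purely in the regularization terms:
\begin{align*}
    \sc_1(\xstar_1-x_0)\cdot(\xstar_2-\xstar_1) + \sc_2(\xstar_2-x_0)\cdot(\xstar_1-\xstar_2) \geq 0.
\end{align*}
Rearranging, this reads $\brk{\sc_1(\xstar_1-x_0) - \sc_2(\xstar_2-x_0)}\cdot(\xstar_2-\xstar_1)\geq 0$.

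From here the remaining work is a polarization-identity computation. Introduce the shifted vectors $a = \xstar_1 - x_0$ and $b = \xstar_2 - x_0$, so $\xstar_2-\xstar_1 = b-a$ and the inequality becomes $(\sc_1 a - \sc_2 b)\cdot(b-a)\geq 0$. Expanding gives $\sc_1 a\cdot b - \sc_1\norm{a}^2 - \sc_2\norm{b}^2 + \sc_2 a\cdot b \geq 0$, i.e.\ $(\sc_1+\sc_2)\,a\cdot b \geq \sc_1\norm{a}^2 + \sc_2\norm{b}^2$. Then I would use $a\cdot b = \tfrac12\brk{\norm{a}^2 + \norm{b}^2 - \norm{a-b}^2}$ and $\norm{a-b}^2 = \norm{\xstar_1-\xstar_2}^2$; substituting and collecting terms yields $(\sc_1+\sc_2)\norm{\xstar_1-\xstar_2}^2 \leq (\sc_1-\sc_2)\norm{b}^2 - (\sc_1-\sc_2)\norm{a}^2 = (\sc_1-\sc_2)\brk{\norm{\xstar_2-x_0}^2 - \norm{\xstar_1-x_0}^2}$, which is exactly the claim after dividing by $\sc_1+\sc_2 > 0$.

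I don't anticipate a serious obstacle here; the only point requiring a little care is the constrained case $\xdomain \subsetneq \R^d$, where one must work with the variational inequalities rather than setting gradients to zero, and make sure that adding the two inequalities is legitimate (it is, since $\xstar_1,\xstar_2 \in \xdomain$ so each is an admissible test point for the other). The monotonicity step for the subgradients of $f$ is standard. Everything else is bookkeeping with the inner product.
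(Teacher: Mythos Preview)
Your argument is correct. The first-order optimality/monotonicity route you take is sound, the polarization computation checks out, and you are right that the constrained case goes through by plugging each minimizer into the other's variational inequality.

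The paper's proof reaches the same inequality by a slightly different, zero-order route: it applies the strong-convexity inequality at the minimizer directly, namely
\[
\frac{\sc_1}{2}\norm{\xstar_2-\xstar_1}^2 \leq g_1(\xstar_2)-g_1(\xstar_1),
\qquad
\frac{\sc_2}{2}\norm{\xstar_1-\xstar_2}^2 \leq g_2(\xstar_1)-g_2(\xstar_2),
\]
and then adds the two, after which the $f$ values cancel and only the quadratic regularization terms survive. No first-order conditions, subgradients, or polarization identity are needed; the result drops out in one rearrangement. Your approach trades this zero-order shortcut for an explicit first-order argument: you never touch function values of $f$, relying instead on monotonicity of $\partial f$ and an inner-product expansion. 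Both proofs are equally elementary; the paper's is a couple of lines shorter and works verbatim whether or not $f$ is differentiable, while yours makes the handling of a constraint set $\xdomain$ more explicit.
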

\begin{proof}%
    From the strong convexity of $f(x)+\frac{\sc_1}{2}\norm{x-x_0}^2$,
    \begin{align*}
        \frac{\sc_1}{2}\norm{\xstar_2-\xstar_1}^2
        &\leq f(\xstar_2)+\frac{\sc_1}{2}\norm{\xstar_2-x_0}^2-f(\xstar_1)-\frac{\sc_1}{2}\norm{\xstar_1-x_0}^2.
    \end{align*}
    Similarly,
    \begin{align*}
        \frac{\sc_2}{2}\norm{\xstar_1-\xstar_2}^2
        &\leq f(\xstar_1)+\frac{\sc_2}{2}\norm{\xstar_1-x_0}^2-f(\xstar_2)-\frac{\sc_2}{2}\norm{\xstar_2-x_0}^2 \\
        &\implies f(\xstar_2)-f(\xstar_1) \leq \frac{\sc_2}{2}\norm{\xstar_1-x_0}^2-\frac{\sc_2}{2}\norm{\xstar_2-x_0}^2 -\frac{\sc_2}{2}\norm{\xstar_1-\xstar_2}^2.
    \end{align*}
    Combining the two inequalities,
    \begin{align*}
        \frac{\sc_1}{2}\norm{\xstar_2-\xstar_1}^2 \leq \frac{\sc_1-\sc_2}{2}\brk*{\norm{\xstar_2-x_0}^2-\norm{\xstar_1-x_0}^2} - \frac{\sc_2}{2}\norm{\xstar_1-\xstar_2}^2,
    \end{align*}
    and we obtain the desired result by rearranging the terms.
\end{proof}
An immediate corollary from the lemma above for $\sc_2=0$ is the following.
\begin{corollary} \label{lemma:reg_distance}
    Let $f(x)$ be a convex function and let $f^{(\sc)}(x)=f(x)+\frac{\sc}{2}\norm{x-x_0}^2$ for some $\sc>0$ and $x_0$. Let $\xstar \in \argmin_x f(x)$ and $\xstar_\sc \in \argmin_x f^{(\sc)}(x)$. Then
    \begin{align*}
        \norm{x_0-\xstar_\sc}^2 + \norm{\xstar_\sc-\xstar}^2 \leq \norm{x_0-\xstar}^2.
    \end{align*}
\end{corollary}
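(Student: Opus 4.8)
The plan is to obtain this as a direct specialization of Lemma~\ref{claim:reg_minima_distance}, taking $\sc_1 = \sc$ and $\sc_2 = 0$. With this choice, $f(x) + \frac{\sc_1}{2}\norm{x-x_0}^2$ is precisely $f^{(\sc)}$, whose minimizer is $\xstar_\sc$, while $f(x) + \frac{\sc_2}{2}\norm{x-x_0}^2 = f(x)$, whose minimizer is $\xstar$. The hypotheses of the lemma are met: $f$ is convex, $\sc_1 = \sc > 0$, and $\sc_2 = 0 \geq 0$ (the lemma was deliberately stated to allow $\sc_2 = 0$).

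The conclusion of the lemma then reads
\[
    \norm{\xstar_\sc - \xstar}^2
    \;\leq\;
    \frac{\sc - 0}{\sc + 0}\left(\norm{\xstar - x_0}^2 - \norm{\xstar_\sc - x_0}^2\right)
    \;=\;
    \norm{x_0 - \xstar}^2 - \norm{x_0 - \xstar_\sc}^2 ,
\]
where I used symmetry of the norm. Rearranging to bring $\norm{x_0 - \xstar_\sc}^2$ to the left-hand side gives exactly $\norm{x_0-\xstar_\sc}^2 + \norm{\xstar_\sc-\xstar}^2 \leq \norm{x_0-\xstar}^2$, which is the claim.

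There is essentially no obstacle here; the only thing worth flagging is that the coefficient $\frac{\sc_1 - \sc_2}{\sc_1 + \sc_2}$ collapses to exactly $1$ when $\sc_2 = 0$, which is precisely what lets the two squared-distance terms combine into the clean Pythagorean-type inequality. (One could alternatively prove it from scratch using the first-order optimality condition for $\xstar_\sc$ on $f^{(\sc)}$ together with convexity of $f$ and optimality of $\xstar$, but reusing Lemma~\ref{claim:reg_minima_distance} is the shortest route.)
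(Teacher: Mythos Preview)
Your proof is correct and matches the paper's approach exactly: the paper states this as an immediate corollary of Lemma~\ref{claim:reg_minima_distance} with $\sc_2=0$, which is precisely what you do.
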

Following lemma present the convergence in both function value and parameter distance of $y_{k+1}$ with respect to $\regemprisk{k}$.
\begin{lemma} \label{lemma:halving}
    For all $k \geq 0$, 
    \begin{align*}
        \regemprisk{k}(y_{k+1})-\regemprisk{k}(\xstar_k) \leq \frac{\sc_k \norm{y_{k}-\xstar_{k}}^2}{2^{\nepochs_k+1}} \qquad \text{ and } \qquad \norm{y_{k+1}-\xstar_{k}}^2 \leq \frac{\norm{y_{k}-\xstar_{k}}^2}{2^{\nepochs_k}}.
    \end{align*}
\end{lemma}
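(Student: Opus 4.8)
The plan is to analyze a single inner call to $\alg$ in epoch $k$, show it contracts the squared distance to $\xstar_k$ by a factor of $\tfrac12$, and then chain this over the $\nepochs_k$ inner epochs. First I would record the relevant properties of the regularized objective: since $\emprisk$ is convex and $\sm$-smooth, $\regemprisk{k}(x)=\emprisk(x)+\tfrac{\sc_k}{2}\norm{x-x_0}^2$ is $\sc_k$-strongly convex and $(\sc_k+\sm)$-smooth, so the convergence guarantee of $\alg$ applies to it with smoothness parameter $\sc_k+\sm$. Invoking it on the $j$-th inner call, which runs for $\thalf{k}$ steps starting from $y_{k,j}$, gives
\[
  \regemprisk{k}(y_{k,j+1})-\regemprisk{k}(\xstar_k)\le \frac{C(\sc_k+\sm)\norm{y_{k,j}-\xstar_k}^2}{\thalf{k}^{\gamma}} .
\]

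Next I would plug in the definition $\thalf{k}=\big(4C(1+\sm/\sc_k)\big)^{1/\gamma}$, so that $\thalf{k}^{\gamma}=4C(\sc_k+\sm)/\sc_k$ and the bound collapses to $\regemprisk{k}(y_{k,j+1})-\regemprisk{k}(\xstar_k)\le \tfrac{\sc_k}{4}\norm{y_{k,j}-\xstar_k}^2$. On the other hand, $\sc_k$-strong convexity of $\regemprisk{k}$ together with the first-order optimality condition $\nabla\regemprisk{k}(\xstar_k)\cdot(y_{k,j+1}-\xstar_k)\ge 0$ (valid since $\xstar_k$ minimizes $\regemprisk{k}$ over $\xdomain$ and $y_{k,j+1}\in\xdomain$) yields
\[
  \regemprisk{k}(y_{k,j+1})-\regemprisk{k}(\xstar_k)\ge \tfrac{\sc_k}{2}\norm{y_{k,j+1}-\xstar_k}^2 .
\]
Combining the two displays gives the per-epoch contraction $\norm{y_{k,j+1}-\xstar_k}^2\le \tfrac12\norm{y_{k,j}-\xstar_k}^2$.

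Finally I would iterate this inequality over $j=0,\dots,\nepochs_k-1$, using $y_{k,0}=y_k$ and $y_{k+1}=y_{k,\nepochs_k}$, which immediately gives $\norm{y_{k+1}-\xstar_k}^2\le 2^{-\nepochs_k}\norm{y_k-\xstar_k}^2$, the second claim. For the first claim I would apply the function-value bound $\regemprisk{k}(y_{k+1})-\regemprisk{k}(\xstar_k)\le \tfrac{\sc_k}{4}\norm{y_{k,\nepochs_k-1}-\xstar_k}^2$ at $j=\nepochs_k-1$ and then substitute the distance contraction $\norm{y_{k,\nepochs_k-1}-\xstar_k}^2\le 2^{-(\nepochs_k-1)}\norm{y_k-\xstar_k}^2$, producing $\regemprisk{k}(y_{k+1})-\regemprisk{k}(\xstar_k)\le \sc_k\norm{y_k-\xstar_k}^2/2^{\nepochs_k+1}$.

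There is no serious obstacle here; the calibration of $\thalf{k}$ is precisely what makes one inner epoch halve the squared distance, and the only point requiring care is the use of the constrained optimality condition for $\xstar_k$, which is what lets strong convexity pass from function value back to distance (and is why $\xstar_k$ must be the minimizer over $\xdomain$, not over $\R^d$). One should also verify the boundary case $\nepochs_k=1$, where the chain is a single step and $2^{-(\nepochs_k-1)}=1$, which is consistent with both claimed bounds.
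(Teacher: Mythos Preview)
Your proposal is correct and follows essentially the same argument as the paper: invoke the convergence guarantee of $\alg$ on $\regemprisk{k}$ with smoothness $\sc_k+\sm$, plug in $\thalf{k}^{\gamma}=4C(\sc_k+\sm)/\sc_k$ to obtain the per-call function-value bound $\tfrac{\sc_k}{4}\norm{y_{k,j}-\xstar_k}^2$, convert to a distance contraction via $\sc_k$-strong convexity, and chain over $j$. The only difference is expository---you spell out the first-order optimality condition for $\xstar_k$ and the boundary case $\nepochs_k=1$, which the paper leaves implicit.
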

\begin{proof}
    From strong convexity and the rate of $\alg$,
    \begin{align*}
        \frac{\sc_k}{2}\norm{y_{k,j+1}-\xstar_k}^2
        &\leq \regemprisk{k}(y_{k,j+1})-\regemprisk{k}(\xstar_k) %
        \leq \frac{C \brk*{\sc_k+\sm} \norm{y_{k,j}-\xstar_k}^2}{4 C \brk*{1+\frac{\sm}{\sc_k}}} %
        = \frac{\sc_k \norm{y_{k,j}-\xstar_k}^2}{4}.
    \end{align*}
    Hence,
    \begin{align*}
        \norm{y_{k,j+1}-\xstar_k}^2 \leq \frac{\norm{y_{k,j}-\xstar_k}^2}{2}.
    \end{align*}
    Repeating this argument and substituting $y_k=y_{k,0}$,
    \begin{align*}
        \norm{y_{k,j+1}-\xstar_k}^2 \leq \frac{\norm{y_{k,0}-\xstar_k}^2}{2^{j+1}}
        =\frac{\norm{y_{k}-\xstar_k}^2}{2^{j+1}}.
    \end{align*}
    Thus,
    \begin{align*}
        \regemprisk{k}(y_{k,\nepochs_k})-\regemprisk{k}(\xstar_k)
        &\leq \frac{C(\sc_k+\sm) \norm{y_{k,\nepochs_k-1}-\xstar_k}^2}{4 C \brk*{1+\sm/\sc_k}} %
        = \frac{\sc_k \norm{y_{k,\nepochs_k-1}-\xstar_k}^2}{4} %
        \leq \frac{\sc_k \norm{y_{k}-\xstar_k}^2}{2^{\nepochs_k+1}}.
    \end{align*}
    We conclude by substituting $y_{k+1}=y_{k,\nepochs_k}$.
\end{proof}
Each time we update the regularization we start from the last $y_k$ iteration. The following lemma bounds the distance of the last iteration to the new minimizer.
\begin{lemma} \label{lemma:bounded_diameter}
    For all $k \geq 0$ it holds that $\norm{y_{k}-\xstar_{k}} \leq \norm{x_0-\xstar_{k}}$.
\end{lemma}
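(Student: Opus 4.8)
The plan is to prove the inequality by induction on $k$, splitting $\norm{y_k-\xstar_k}$ via the triangle inequality into (i) the progress made by the inner loop toward the \emph{previous} regularized minimizer $\xstar_{k-1}$, controlled by \cref{lemma:halving}, and (ii) the drift of the regularized minimizer caused by halving the regularization, controlled by \cref{claim:reg_minima_distance}.

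\textbf{Base case.} For $k=0$ the algorithm sets $y_0\gets x_0$, so $\norm{y_0-\xstar_0}=\norm{x_0-\xstar_0}$ and the claim holds with equality.

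\textbf{Inductive step.} Assume $\norm{y_k-\xstar_k}\leq\norm{x_0-\xstar_k}$; I will bound $\norm{y_{k+1}-\xstar_{k+1}}$. By the triangle inequality,
\[
    \norm{y_{k+1}-\xstar_{k+1}}\leq\norm{y_{k+1}-\xstar_k}+\norm{\xstar_k-\xstar_{k+1}}.
\]
For the first term, \cref{lemma:halving} gives $\norm{y_{k+1}-\xstar_k}^2\leq 2^{-\nepochs_k}\norm{y_k-\xstar_k}^2$, and using the induction hypothesis together with $\nepochs_k\geq 1$ (from the definition of $\nepochs_k$ in \cref{alg:l2}) this is at most $\tfrac12\norm{x_0-\xstar_k}^2$. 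For the second term, I apply \cref{claim:reg_minima_distance} with $\sc_1=\sc_k$ and $\sc_2=\sc_{k+1}=\sc_k/2$, so that $\tfrac{\sc_1-\sc_2}{\sc_1+\sc_2}=\tfrac13$; since the left-hand side of that lemma is nonnegative, this simultaneously yields $\norm{\xstar_{k+1}-x_0}\geq\norm{\xstar_k-x_0}$ and $\norm{\xstar_k-\xstar_{k+1}}^2\leq\tfrac13\brk{\norm{\xstar_{k+1}-x_0}^2-\norm{\xstar_k-x_0}^2}$.

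Writing $a=\norm{x_0-\xstar_k}$ and $b=\norm{x_0-\xstar_{k+1}}$ with $0\leq a\leq b$, the two estimates combine to $\norm{y_{k+1}-\xstar_{k+1}}\leq a/\sqrt2+\sqrt{(b^2-a^2)/3}$, and it remains to check the elementary inequality $a/\sqrt2+\sqrt{(b^2-a^2)/3}\leq b$. This follows by isolating the square root (the right-hand side $b-a/\sqrt2$ is positive since $b\geq a$) and squaring, which reduces the claim to $2b^2-3\sqrt2\,ab+\tfrac52 a^2\geq 0$; viewed as a quadratic in $b$ it has positive leading coefficient and discriminant $18a^2-20a^2<0$, hence is always nonnegative. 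This closes the induction. The only point requiring care is the weighting of the two error terms: the per-epoch contraction factor must be at least $\sqrt2$ for their sum to stay below $b$, which is exactly what $\nepochs_k\geq 1$ guarantees — the trivial bound $\norm{y_{k+1}-\xstar_k}\leq\norm{x_0-\xstar_k}$ alone would not suffice when $\xstar_{k+1}$ is close to $\xstar_k$. Everything else is a routine combination of the triangle inequality with the previously established \cref{lemma:halving} and \cref{claim:reg_minima_distance}.
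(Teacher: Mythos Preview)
Your proof is correct and follows essentially the same route as the paper: induction on $k$, a triangle-inequality split into the inner-loop contraction term (handled by \cref{lemma:halving} and $\nepochs_k\ge 1$) and the minimizer-drift term (handled by \cref{claim:reg_minima_distance} with ratio $\tfrac{1}{3}$), followed by the same scalar inequality. The only cosmetic difference is that the paper normalizes by $b$ and checks $\tfrac{x}{\sqrt2}+\sqrt{(1-x^2)/3}\le 1$ for $x\in[0,1]$, whereas you verify it directly via the discriminant of the quadratic $2b^2-3\sqrt2\,ab+\tfrac52 a^2$.
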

\begin{proof}%
    For $k=0$ the claim is immediate as $y_0=x_0$.
    For $k>0$, using the triangle inequality,
    \begin{align*}
        \norm{y_k-\xstar_k}
        &\leq \norm{y_k-\xstar_{k-1}} + \norm{\xstar_{k-1}-\xstar_k}.
    \end{align*}
    By \cref{claim:reg_minima_distance},
    \begin{align*}
        \norm{\xstar_{k-1}-\xstar_k}^2
        &\leq \frac{\sc_{k-1}-\sc_{k}}{\sc_{k-1}+\sc_{k}} \brk*{\norm{\xstar_{k}-x_0}^2-\norm{\xstar_{k-1}-x_0}^2}
        \leq \frac{\norm{\xstar_{k}-x_0}^2-\norm{\xstar_{k-1}-x_0}^2}{3}.
    \end{align*}
    Due to \cref{lemma:halving},
    \begin{align*}
        \norm{y_k-\xstar_{k-1}}
        &\leq \frac{\norm{y_{k-1}-\xstar_{k-1}}}{2^{\nepochs_k/2}} \\
        &\leq \frac{\norm{x_0-\xstar_{k-1}}}{2^{\nepochs_k/2}} \tag{induction} \\
        &\leq \frac{\norm{x_0-\xstar_{k-1}}}{\sqrt{2}}. \tag{$\nepochs_k \geq 1$}
    \end{align*}
    Thus,
    \begin{align*}
        \norm{y_k-\xstar_k}
        & \leq \frac{\norm{x_0-\xstar_{k-1}}}{\sqrt{2}} + \sqrt{\frac{\norm{\xstar_{k}-x_0}^2-\norm{\xstar_{k-1}-x_0}^2}{3}} \\
        &= \norm{\xstar_{k}-x_0} \brk*{\frac{\norm{x_0-\xstar_{k-1}}/\norm{\xstar_k-x_0}}{\sqrt{2}} + \sqrt{\frac{1-\norm{\xstar_{k-1}-x_0}^2/\norm{\xstar_k-x_0}^2}{3}}}.
    \end{align*}
    From \cref{claim:reg_minima_distance}, $\norm{\xstar_{k-1}-x_0} \leq \norm{\xstar_k-x_0}$, and since for $0 \leq x \leq 1$, $\frac{x}{\sqrt{2}}+\sqrt{\frac{1-x^2}{3}} \leq 1$,
    \begin{align*}
        \norm{y_k-\xstar_k}
        &\leq \norm{\xstar_{k}-x_0}. \qedhere
    \end{align*}
\end{proof}
Next lemma yields the convergence guarantee of the $y_k$ sequence.
\begin{lemma} \label{lemma:yk-convergence}
    For all $k \geq 0$,
    \begin{align*}
        \emprisk(y_{k+1})-\emprisk(\xstar)
        &\leq \frac{3 \sc_k \norm{x_0-\xstar}^2}{4}.
    \end{align*}
\end{lemma}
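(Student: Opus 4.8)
The plan is to sandwich $\emprisk(y_{k+1})-\emprisk(\xstar)$ between the regularized objective $\regemprisk{k}$ evaluated at $y_{k+1}$ and at $\xstar$, paying only a regularization penalty of $\tfrac{\sc_k}{2}\norm{x_0-\xstar}^2$ in the process, and then to control the residual optimization error $\regemprisk{k}(y_{k+1})-\regemprisk{k}(\xstar_k)$ using the preceding lemmas. Concretely, I would first write
\begin{align*}
    \emprisk(y_{k+1})-\emprisk(\xstar)
    =
    \brk*{\regemprisk{k}(y_{k+1}) - \tfrac{\sc_k}{2}\norm{y_{k+1}-x_0}^2}
    -
    \brk*{\regemprisk{k}(\xstar) - \tfrac{\sc_k}{2}\norm{\xstar-x_0}^2},
\end{align*}
then drop the nonnegative term $\tfrac{\sc_k}{2}\norm{y_{k+1}-x_0}^2$ and use the optimality $\regemprisk{k}(\xstar_k)\le\regemprisk{k}(\xstar)$ to obtain
\begin{align*}
    \emprisk(y_{k+1})-\emprisk(\xstar)
    \le
    \brk*{\regemprisk{k}(y_{k+1}) - \regemprisk{k}(\xstar_k)} + \tfrac{\sc_k}{2}\norm{\xstar-x_0}^2.
\end{align*}

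Next I would bound the optimization error term. By \cref{lemma:halving} it is at most $\sc_k\norm{y_k-\xstar_k}^2/2^{\nepochs_k+1}$, and since $\nepochs_k\ge1$ we have $2^{\nepochs_k+1}\ge4$, so this is at most $\tfrac14\sc_k\norm{y_k-\xstar_k}^2$. Applying \cref{lemma:bounded_diameter} gives $\norm{y_k-\xstar_k}\le\norm{x_0-\xstar_k}$, and \cref{lemma:reg_distance} (invoked with $f=\emprisk$ and $\sc=\sc_k$) gives $\norm{x_0-\xstar_k}^2\le\norm{x_0-\xstar}^2$. Chaining these yields $\regemprisk{k}(y_{k+1})-\regemprisk{k}(\xstar_k)\le\tfrac14\sc_k\norm{x_0-\xstar}^2$, and combining with the previous display gives $\emprisk(y_{k+1})-\emprisk(\xstar)\le\tfrac14\sc_k\norm{x_0-\xstar}^2+\tfrac12\sc_k\norm{x_0-\xstar}^2=\tfrac34\sc_k\norm{x_0-\xstar}^2$, as claimed.

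There is no genuine obstacle here, since all the heavy lifting is done by the earlier lemmas; the only points requiring a bit of care are correctly accounting for the $\tfrac{\sc_k}{2}\norm{x_0-\xstar}^2$ regularization gap (this is exactly what separates the $\tfrac14$ coming out of the halving lemma from the final $\tfrac34$) and remembering to use $\nepochs_k\ge1$ to collapse the potentially tiny factor $2^{-(\nepochs_k+1)}$ into the clean constant $\tfrac14$.
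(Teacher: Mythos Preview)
Your argument is correct and follows essentially the same route as the paper: unpack the regularization to pass from $\emprisk$ to $\regemprisk{k}$, drop the nonnegative $\tfrac{\sc_k}{2}\norm{y_{k+1}-x_0}^2$ term, replace $\regemprisk{k}(\xstar)$ by $\regemprisk{k}(\xstar_k)$, and then apply \cref{lemma:halving} (with $\nepochs_k\ge 1$), \cref{lemma:bounded_diameter}, and \cref{lemma:reg_distance} to bound the residual by $\tfrac{\sc_k}{4}\norm{x_0-\xstar}^2$. The only cosmetic difference is that the paper invokes strong convexity of $\regemprisk{k}$ to obtain an extra term $-\tfrac{\sc_k}{2}\norm{\xstar_k-\xstar}^2$ which it then discards, whereas you use the zero-order optimality of $\xstar_k$ directly; the end result is identical.
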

\begin{proof}[of \cref{lemma:yk-convergence}]
    By the strong convexity of $\regemprisk{k}$,
    \begin{align*}
        \regemprisk{k}(y_{k+1})-\regemprisk{k}(\xstar)
        &= \regemprisk{k}(y_{k+1})-\regemprisk{k}(\xstar_k)+\regemprisk{k}(\xstar_k)-\regemprisk{k}(\xstar) \\
        &\leq \regemprisk{k}(y_{k+1})-\regemprisk{k}(\xstar_k) - \frac{\sc_k}{2} \norm{\xstar_k-\xstar}^2.
    \end{align*}
    Hence, using the definition of $\regemprisk{k}$,
    \begin{align*}
        \emprisk(y_{k+1})-\emprisk(\xstar)
        &\leq \regemprisk{k}(y_{k+1})-\regemprisk{k}(\xstar) + \frac{\sc_k}{2}\norm{\xstar-x_0}^2 \\
        &\leq  \regemprisk{k}(y_{k+1})-\regemprisk{k}(\xstar_k) + \frac{\sc_k}{2}\brk*{\norm{\xstar-x_0}^2-\norm{\xstar_k-\xstar}^2} \\
        &\leq \frac{\sc_k \norm{y_k-\xstar_k}}{4} + \frac{\sc_k \norm{\xstar-x_0}^2}{2}. \tag{\cref{lemma:halving} and $\nepochs_k \geq 1$}
    \end{align*}
    By \cref{lemma:bounded_diameter},
    \begin{align*}
        \emprisk(y_{k+1})-\emprisk(\xstar)
        &\leq \frac{\sc_k}{4}\brk*{\norm{x_0-\xstar_k}^2 + 2 \norm{\xstar-x_0}^2}.
    \end{align*}
    We conclude by applying \cref{lemma:reg_distance}.
\end{proof}
The following standard lemma bounds the distance between the minimizers of two functions where one of them is strongly convex. (For completeness, we include a proof in~\cref{proof:lemma:reg_distance_bounded}.)
\begin{lemma} \label{lemma:reg_distance_bounded}
    Let $f_1,f_2 : \xdomain \mapsto \R$ be convex and $\sc$-strongly convex functions (respectively) defined over a closed and convex domain $\xdomain \subseteq \R^d$, and let $x_1 \in \argmin_{x \in \xdomain} f_1(x)$ and $x_2 \in \argmin_{x \in \xdomain} f_2(x)$.
    Then for $h = f_2 - f_1$ we have
    \begin{align*}
        \norm{x_2 - x_1} &\leq \frac{2}{\sc}\dnorm{\nabla h(x_1)}.
    \end{align*}
\end{lemma}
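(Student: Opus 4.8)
The plan is to combine the first-order optimality (variational) inequalities at $x_1$ and $x_2$ with the gradient-monotonicity consequence of strong convexity, and then apply H\"older's inequality for the dual-norm pair $(\norm{\cdot},\dnorm{\cdot})$ to extract $\dnorm{\nabla h(x_1)}$. If $x_1 = x_2$ the claim is trivial, so we may assume $\norm{x_1-x_2}>0$.

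First I would record the two variational inequalities implied by the minimality of $x_1$ and $x_2$ over the closed convex set $\xdomain$: since $x_1 \in \argmin f_1$ with $f_1$ convex, $\nabla f_1(x_1)\cdot(x_2-x_1) \geq 0$; and since $x_2 \in \argmin f_2$, $\nabla f_2(x_2)\cdot(x_1-x_2) \geq 0$. Next I would invoke the standard fact that $\sc$-strong convexity of $f_2$ yields the monotonicity estimate $\brk{\nabla f_2(x_1)-\nabla f_2(x_2)}\cdot(x_1-x_2) \geq \sc\norm{x_1-x_2}^2$, obtained simply by adding the strong-convexity inequality written at $x_1$ (with base point $x_2$) to the one written at $x_2$ (with base point $x_1$).

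Then I would chain these together: starting from $\sc\norm{x_1-x_2}^2 \leq \brk{\nabla f_2(x_1)-\nabla f_2(x_2)}\cdot(x_1-x_2)$, I drop the term $-\nabla f_2(x_2)\cdot(x_1-x_2)$ using the second variational inequality (it is $\leq 0$), substitute $\nabla f_2(x_1) = \nabla f_1(x_1) + \nabla h(x_1)$, drop the term $\nabla f_1(x_1)\cdot(x_1-x_2)$ using the first variational inequality (also $\leq 0$), and bound what remains by $\nabla h(x_1)\cdot(x_1-x_2) \leq \dnorm{\nabla h(x_1)}\norm{x_1-x_2}$ via H\"older's inequality. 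Dividing through by $\norm{x_1-x_2}$ gives $\norm{x_1-x_2} \leq \sc^{-1}\dnorm{\nabla h(x_1)}$, which is in fact slightly stronger than the claimed bound with the constant $2$.

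I do not anticipate a genuine obstacle here; the only points that need a little care are (i) using the dual norm correctly in H\"older's inequality so that the argument is valid in a general normed space and not only in the Euclidean case, (ii) getting the constant in the strong-convexity monotonicity inequality right (it is $\sc$, not $\sc/2$, since the two quadratic terms add), and (iii) if one prefers not to assume differentiability of $f_1,f_2$, replacing every gradient by a subgradient throughout, which leaves each step of the chain intact. The gap between the $\sc^{-1}$ we actually obtain and the stated $2\sc^{-1}$ merely provides slack to spare.
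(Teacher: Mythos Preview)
Your argument is correct and follows essentially the same route as the paper: strong convexity of $f_2$, first-order optimality of $x_1$ for $f_1$, and H\"older's inequality. The one minor difference is that the paper uses a single strong-convexity inequality together with the zero-order optimality $f_2(x_1)\geq f_2(x_2)$, yielding $\tfrac{\sc}{2}\norm{x_1-x_2}^2 \leq \nabla f_2(x_1)\cdot(x_1-x_2)$ and hence the constant $2/\sc$; you instead sum two strong-convexity inequalities to obtain the monotonicity estimate and then use the first-order optimality of $x_2$, which sharpens the constant to $1/\sc$ as you observed.
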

Following is the stability guarantee for the $y_k$ iterates of $\stabregconvex$.
\begin{lemma} \label{lemma:yk-stability}
    For $k \geq 0$, the iterate $y_{k+1}$ produced by $\stabregconvex$ is $(\ifrac{6 \lip^2}{n \sc_k})$-uniformly stable.
\end{lemma}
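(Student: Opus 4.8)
The plan is to prove by induction on $k$ that $y_{k+1}$ is $(\ifrac{6\lip^2}{n\sc_k})$-uniformly stable, where stability here means: for neighbouring samples $S,S'$ (differing in one example), if we denote by $y_{k+1}$ and $y'_{k+1}$ the corresponding iterates, then $\norm{y_{k+1}-y'_{k+1}} \leq \ifrac{6\lip^2}{n\sc_k G}$ — more precisely, the $\ifrac{6\lip^2}{n\sc_k}$ bound will be on the quantity that, after multiplying by the Lipschitz constant $\lip$, controls $\abs{\ell(y_{k+1};z)-\ell(y'_{k+1};z)}$; so the real target is $\norm{y_{k+1}-y'_{k+1}} \leq \ifrac{6\lip}{n\sc_k}$ (Lipschitzness then gives the loss bound). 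First I would recall the two ingredients the introduction advertises: (a) the minimizer of a strongly convex function is stable, and (b) with smoothness the algorithm converges \emph{geometrically} to that minimizer, so the iterate inherits the minimizer's stability up to the accumulated geometric error.

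The key steps, in order. Step 1: bound the distance between the regularized minimizers $\xstar_k$ and $\xstar_k{}'$ of $\regemprisk{k}$ and $\regemprisk{k}'$. Since these differ only in one of the $n$ loss terms, $h \eqdef \regemprisk{k}' - \regemprisk{k} = \tfrac1n(\ell(\cdot;z_i')-\ell(\cdot;z_i))$ has gradient norm at most $\ifrac{2\lip}{n}$ (each loss is $\lip$-Lipschitz), and $\regemprisk{k}$ is $\sc_k$-strongly convex, so Lemma \ref{lemma:reg_distance_bounded} gives $\norm{\xstar_k - \xstar_k{}'} \leq \ifrac{4\lip}{n\sc_k}$. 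Step 2: control the distance from $y_{k+1}$ to $\xstar_k$. By Lemma \ref{lemma:halving} together with Lemma \ref{lemma:bounded_diameter}, $\norm{y_{k+1}-\xstar_k}^2 \leq \ifrac{\norm{y_k-\xstar_k}^2}{2^{\nepochs_k}} \leq \ifrac{\norm{x_0-\xstar_k}^2}{2^{\nepochs_k}} \leq \ifrac{D^2}{2^{\nepochs_k}}$, and plugging in $\nepochs_k \geq 2\log_2(\ifrac{\sc_k D n}{G})$ yields $\norm{y_{k+1}-\xstar_k} \leq \ifrac{G}{n\sc_k}$ (and similarly for the primed quantities). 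Step 3: combine via the triangle inequality, $\norm{y_{k+1}-y'_{k+1}} \leq \norm{y_{k+1}-\xstar_k} + \norm{\xstar_k-\xstar_k{}'} + \norm{\xstar_k{}'-y'_{k+1}} \leq \ifrac{G}{n\sc_k} + \ifrac{4G}{n\sc_k} + \ifrac{G}{n\sc_k} = \ifrac{6G}{n\sc_k}$, and finally multiply by $\lip$ using the $\lip$-Lipschitzness of $\ell(\cdot;z)$ to get $\abs{\ell(y_{k+1};z)-\ell(y'_{k+1};z)} \leq \ifrac{6\lip^2}{n\sc_k}$ for every $z$.

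One subtlety I would be careful about: in Step 2 the iterates $y_{k+1}$ and $y'_{k+1}$ are produced by running $\alg$ on \emph{different} objectives ($\regemprisk{k}$ vs.\ $\regemprisk{k}'$), each started from its own $y_k$ / $y'_k$; so I cannot directly compare the two trajectories of $\alg$ (the base algorithm is a black box and need not be non-expansive). This is precisely why the argument routes through the two \emph{minimizers} rather than coupling the trajectories: each trajectory is pinned near its own minimizer by the geometric-rate bound, and the minimizers are close by strong convexity. I would therefore emphasize that Lemma \ref{lemma:halving} and Lemma \ref{lemma:bounded_diameter} apply verbatim to the primed run as well (they are stated for arbitrary input), so the bound $\norm{y'_{k+1}-\xstar_k{}'} \leq \ifrac{G}{n\sc_k}$ holds without any further assumption on $\alg$. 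The main obstacle is thus conceptual rather than computational — recognizing that stability must be transferred through the minimizer — after which the three bounds assemble by a one-line triangle inequality; I would double-check the constant bookkeeping so that the final constant is exactly $6$ as claimed (the three pieces contributing $1+4+1$).
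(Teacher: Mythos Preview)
Your proposal is correct and follows essentially the same route as the paper: decompose $\norm{y_{k+1}-y'_{k+1}}$ through the two regularized minimizers, bound the iterate-to-minimizer distances via Lemma~\ref{lemma:halving} and Lemma~\ref{lemma:bounded_diameter} together with the choice of $\nepochs_k$, bound the minimizer-to-minimizer distance via Lemma~\ref{lemma:reg_distance_bounded}, and finish with Lipschitzness. One small remark: the argument is not actually inductive in $k$ (no inductive hypothesis is invoked), and the step $\norm{x_0-\xstar_k}\leq D$ implicitly uses Corollary~\ref{lemma:reg_distance} in addition to Lemma~\ref{lemma:bounded_diameter}; otherwise your bookkeeping and constants match the paper exactly.
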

\begin{proof}[of \cref{lemma:yk-stability}]
    Let $S=\brk[b]{z_1,\dots,z_n}$ and $S'=\brk[b]{z_1,\dots,z_{i-1},z'_i,z_{i+1},\dots,z_n}$ be two neighboring datasets and
    let $\brk[c]{y_k}_k,\brk[c]{\yt_k}_k$ be the two ``$y_k$'' iterates obtained from $\stabregconvex$ respectively.
    Using the triangle inequality,
    \begin{align*}
        \norm{y_{k+1}-\yt_{k+1}}
        &\leq \norm{y_{k+1}-\xstar_k} + \norm{\xstar_k-\xtstar_k} + \norm{\xtstar_k-\yt_{k+1}}.
    \end{align*}
    Using \cref{lemma:halving} and the definition of $\nepochs_k$,
    \begin{align*}
        \norm{y_{k+1}-\xstar_k}
        &\leq \frac{\norm{y_{k}-\xstar_k}}{2^{\nepochs_k/2}}
        \leq \frac{G \norm{y_{k}-\xstar_k}}{D n \sc_k}.
    \end{align*}
    Combining \cref{lemma:bounded_diameter} and \cref{lemma:reg_distance},
    \begin{align*}
        \norm{y_{k}-\xstar_k}
        &\leq \norm{\xstar_{k}-x_0}
        \leq \norm{\xstar-x_0}
        \leq D.
    \end{align*}
    Thus
    $
        \norm{y_{k+1}-\xstar_k} \leq \frac{G}{n \sc_k},
    $
    and similarly $\norm{\xtstar_k-\yt_{k+1}} \leq \frac{G}{n \sc_k}$. Hence,
    \begin{align*}
        \norm{y_{k+1}-\yt_{k+1}} \leq \norm{\xstar_k-\xtstar_k} + \frac{2G}{n \sc_k}.
    \end{align*}
    We will now focus on bounding $\norm{\xstar_k-\xtstar_k}$.
    By invoking \cref{lemma:reg_distance_bounded} with $\regemprisk[S]{k}$ and $\regemprisk[S']{k}$,
    \begin{align*}
        \norm{\xstar_k - \xtstar_k}
        &\leq \frac{2 \dnorm{\ell(\xstar_k;z_i')-\ell(\xstar_k;z_i)}}{n \sc_k}
        \leq \frac{4 \lip}{n \sc_k},
    \end{align*}
    where we have used the fact that $\ell(\cdot,z_i)$ and $\ell(\cdot,z_i')$ are $\lip$-Lipschitz.
    Thus,
    $
        \norm{y_{k+1}-\yt_{k+1}}
        \leq \frac{6G}{n \sc_k}.
    $
    Again using the fact that $\ell(\cdot,z)$ is Lipschitz,
    \begin{align*}
        \sup_{z \in \Z} \abs{\ell(y_{k+1};z)-\ell(\yt_{k+1};z)}
        &\leq \lip\norm{y_{k+1}-\yt_{k+1}}
        \leq \frac{6 \lip^2}{n \sc_k},
    \end{align*}
    hence we establish uniform stability.
\end{proof}
We are now ready to prove \cref{thm:main_result1}.
\begin{proof}[of \cref{thm:main_result1}]
If $0<t<t_1$ then $y_t=x_0$ and uniform stability is immediate.
Regarding convergence, by smoothness,
\begin{align*}
    \emprisk(x_0)-\emprisk(\xstar)
    &\leq \frac{\sm \diam^2}{2}
    \leq \frac{\sm \diam^2 (\thalf{0}\nepochs_0)^{\gamma}}{2 t^\gamma}
    = \otil\brk*{\frac{C \sm \diam^2}{t^\gamma}}.
\end{align*}
Let $K=\max \brk[c]{k:t_{k+1} \leq t}$ for some $t>0$ which implies $x_t=y_{K+1}$.
To obtain uniform stability by \cref{lemma:yk-stability} we need to lower bound $\sc_{K}$. Thus,
\begin{align*}
    t
    &\geq t_{K+1}
    \geq \thalf{K}\nepochs_{K}
    \geq \brk*{\frac{4 C \sm}{\sc_{K}}}^{1/\gamma}
    \implies \sc_k = \Omega\brk*{\frac{C \sm}{t^\gamma}}.
\end{align*}
Thus, $x_t$ is $O\brk*{\ifrac{\lip^2 t^\gamma}{C \sm n}}$-uniformly stable.
For the convergence result we need to upper bound $\sc_K$ and invoke \cref{lemma:yk-convergence}.
First we bound $\thalf{k}$ by a geometric series,
\begin{align*}
    \thalf{k}
    &= \brk3{4 C \brk3{\frac{\sm}{\sc_k} + 1}}^{1/\gamma}
    = \brk*{4 C \brk1{2^{k+2} + 1}}^{1/\gamma}
    \leq \brk1{20 C \cdot 2^{k}}^{1/\gamma}
    \implies \sum_{i=0}^{k} \thalf{i}
    \leq (20 C)^{1/\gamma} \frac{2^{(k+1)/\gamma}}{2^{1/\gamma}-1}.
\end{align*}
Thus, using the definition of $\nepochs_k$,
\begin{align*}
    t
    &< t_{K+2}
    = \sum_{k=0}^{K+1} \nepochs_k \thalf{k}
    \leq \max\brk[c]*{1,2 \log_2 \frac{\sm D n}{4G}} \sum_{k=0}^{K+1} \thalf{k}
    \leq \brk*{20 C}^{1/\gamma} \max\brk[c]*{1,2 \log_2 \frac{\sm D n}{4G}} \frac{2^{(K+2)/\gamma}}{2^{1/\gamma}-1}
    .
\end{align*}
Rearranging the terms and using $0 < \gamma \leq 2$,
\begin{align*}
    \frac{1}{2^{K+2}}
    &\leq 
    \frac{10 C}{t^\gamma \brk*{1-2^{-1/\gamma}}^\gamma}\max\brk[c]*{1,2 \log_2^\gamma \frac{\sm D n}{4G}}
    =O\brk*{\frac{C}{t^\gamma}\max\brk[c]*{1,2 \log_2^\gamma \frac{\sm D n}{4G}}}.
\end{align*}
We conclude using \cref{lemma:yk-convergence} with $x_t=y_{K+1}$ and $\sc_K=\sm/2^{K+2}$,
\begin{align*}
    \emprisk(x_t)-\emprisk(\xstar) 
    &\leq \frac{3 \sc_K \diam^2}{4}
    = O\brk2{\frac{C\sm D^2}{t^\gamma} \max\brk[c]2{1,\log^{\gamma}\brk2{\frac{\sm D n}{G}}}}
    . \qedhere
\end{align*}
\end{proof}

\section{Stable Mirror Descent for general norms} 
\label{sec:mirror_alternative}

In this section we provide a uniformly stable variant of Mirror Descent for empirical risk minimization in general normed spaces.
The algorithm, which we term $\stabregrel$ is presented in \cref{alg:stable_md}).
Here we assume that the loss function $\ell: \xdomain \times \Z \mapsto \R$ is convex, $\sm$-smooth and $\lip$-Lipschitz (in its first argument) w.r.t.~a general $\norm{\cdot}$.

Like the standard Mirror Descent, the algorithm is parameterized by a regularization function $\reg(x)$ which is $1$-strongly convex w.r.t.~$\norm{\cdot}$.
Let $x_0 = \argmin_{x \in \xdomain} \reg(x)$, $\xstar \in \argmin_{x \in \xdomain} \emprisk(x)$ and let $\diam^2 \geq \reg(\xstar)-\reg(x_0)$.
\begin{algorithm2e}[h]
    \SetAlgoLined
    \DontPrintSemicolon
    \KwIn{Sample $S=\brk[b]{z_i}_{i=1}^n$, smoothness $\sm$, $x_0$, regularization parameter $\sc$, number of steps $T$.}
    \For{$t \gets 0$ \KwTo $T-1$}{
        \[x_{t+1} \gets \argmin_{x \in \xdomain} \brk[c]{\nabla \emprisk(x_t) \cdot (x-x_t)+\sm\breg(x,x_t)+\sc \reg(x)}.\]
    }
    \KwOut{$x_T$.}
    \caption{\stabregrel: Uniformly Stable Mirror Descent} \label{alg:stable_md}
\end{algorithm2e}
In each step of $\stabregrel$ we perform linearization of $\emprisk(x_t)$ and use $\sc \reg(x)$ as a regularization in addition to the Bregman of the mirror descent step, which lets us obtain linear rate convergence on the regularized objective.
Hence, we obtain stability by converging near to a regularized minimizer at a minimal cost.
$\sc$ is carefully tuned to balance the stability and empirical error. 
Note that the update step of $\stabregrel$ is in fact a Mirror Descent step on the regularized function $\regemprisk{\sc}(x) \eqdef \emprisk(x) + \sc \reg(x)$ with smoothness of $\sc+\sm$. This can be seen by comparing the $\argmin$ step of $\stabregrel$ and that of Mirror Descent (\cref{eq:mirror_step}), and is written formally in the proof of \cref{thm:main_result2}.

Further, note that first-order access to the empirical risk suffices for implementing our method as we only access it by performing linearization in each step.
Following is the main result for this section, describing the stability and convergence of \cref{alg:stable_md}.
\begin{theorem} \label{thm:main_result2}
    Assume $\ell(\cdot,z)$ is convex, $\sm$-smooth and $\lip$-Lipschitz w.r.t.~$\norm{\cdot}$ on $\xdomain$, $\reg(x)$ is $1$-strongly convex w.r.t.~$\norm{\cdot}$ on $\xdomain$, $x_0 = \argmin_{x \in \xdomain} \reg(x)$ and $\diam^2 \geq \reg(\xstar)-\reg(x_0)$. Then given $T \geq 2 \log \frac{\sm \diam n}{\lip}$, the output of $\stabregrel$ (the final iterate $x_T$) with $\sc=\frac{\sm}{T} \max\brk[c]!{1,2\log_2 \frac{\sm \diam n}{\lip T}}$ satisfies the following:
    \begin{itemize}
        \item
        $x_T$ is $O\brk{\ifrac{G^2 T}{\sm n}}$-uniformly stable.
        \item
        $\emprisk(x_T)-\emprisk(\xstar) = \otil\brk{\ifrac{\sm \diam^2}{T}}$.
    \end{itemize}
\end{theorem}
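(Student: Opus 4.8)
The plan is to build on the remark made just before the theorem, that each step of $\stabregrel$ coincides with a Mirror Descent step (as in \cref{eq:mirror_step}) on the regularized empirical risk $\regemprisk{\sc}(x)=\emprisk(x)+\sc\reg(x)$ with smoothness parameter $\sm+\sc$; I would first verify this by expanding the term $\sm\breg(x,x_t)$ in the update of \cref{alg:stable_md} and matching it coefficient-by-coefficient with the Mirror Descent step on $\regemprisk{\sc}$, modulo additive constants in $x$. I would then record the structural facts about $\regemprisk{\sc}$ relative to the mirror map $\reg$: since $\ell(\cdot,z)$ is $\sm$-smooth w.r.t.\ $\norm{\cdot}$ and $\reg$ is $1$-strongly convex w.r.t.\ $\norm{\cdot}$ (hence $\breg(y,x)\ge\tfrac12\norm{y-x}^2$), the empirical risk is $\sm$-smooth relative to $\reg$, so $\regemprisk{\sc}$ is $(\sm+\sc)$-smooth relative to $\reg$; and since $\emprisk$ is convex, $\regemprisk{\sc}$ is $\sc$-strongly convex both relative to $\reg$ and w.r.t.\ $\norm{\cdot}$. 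Let $\xstar_\sc$ be the unique minimizer of $\regemprisk{\sc}$; combining $\regemprisk{\sc}(\xstar_\sc)\le\regemprisk{\sc}(\xstar)$ with $\emprisk(\xstar)\le\emprisk(\xstar_\sc)$ gives $\reg(\xstar_\sc)\le\reg(\xstar)$, and since $x_0=\argmin\reg$ this yields $\breg(\xstar_\sc,x_0)\le\reg(\xstar_\sc)-\reg(x_0)\le\reg(\xstar)-\reg(x_0)\le\diam^2$.

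For the convergence bound I would invoke the standard $O(1/T)$ guarantee of Mirror Descent on the $(\sm+\sc)$-relatively-smooth convex function $\regemprisk{\sc}$ — valid for the last iterate because the update exactly minimizes the relative-smoothness majorant, so $\regemprisk{\sc}(x_{t+1})\le\regemprisk{\sc}(x_t)$ — which gives $\regemprisk{\sc}(x_T)-\regemprisk{\sc}(\xstar_\sc)\le(\sm+\sc)\breg(\xstar_\sc,x_0)/T\le(\sm+\sc)\diam^2/T$. Converting back to the unregularized risk, $\emprisk(x_T)-\emprisk(\xstar)=[\regemprisk{\sc}(x_T)-\regemprisk{\sc}(\xstar_\sc)]+[\regemprisk{\sc}(\xstar_\sc)-\sc\reg(x_T)-\emprisk(\xstar)]\le(\sm+\sc)\diam^2/T+\sc(\reg(\xstar)-\reg(x_0))\le(\sm+\sc)\diam^2/T+\sc\diam^2$, using $\regemprisk{\sc}(\xstar_\sc)\le\emprisk(\xstar)+\sc\reg(\xstar)$ and $\reg(x_T)\ge\reg(x_0)$. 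Plugging in $\sc=(\sm/T)\max\{1,2\log_2(\sm\diam n/\lip T)\}$ and using the hypothesis $T\ge2\log(\sm\diam n/\lip)$ (which makes $\sc\le\sm$) yields the claimed $\otil(\sm\diam^2/T)$.

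For stability I would instead exploit relative strong convexity to get a linear rate. The one-step Mirror Descent analysis — the three-point identity from the optimality of $x_{t+1}$, combined with relative smoothness and relative strong convexity of $\regemprisk{\sc}$ — gives $\breg(\xstar_\sc,x_{t+1})\le\brk1{1-\tfrac{\sc}{\sm+\sc}}\breg(\xstar_\sc,x_t)$, hence $\breg(\xstar_\sc,x_T)\le\brk1{\tfrac{\sm}{\sm+\sc}}^T\breg(\xstar_\sc,x_0)\le2^{-N}\diam^2$ with $N=\max\{1,2\log_2(\sm\diam n/\lip T)\}$, using $(1+N/T)^{T/N}\ge2$ (valid since $N\le T$). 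By $1$-strong convexity of $\reg$, $\norm{x_T-\xstar_\sc}^2\le2\breg(\xstar_\sc,x_T)\le2^{1-N}\diam^2$, and substituting $N$ and distinguishing the two branches of the $\max$ this is $O(\lip^2T^2/(\sm^2n^2))$, i.e.\ $\norm{x_T-\xstar_\sc}=O(\lip T/(\sm n))$. Separately, $\xstar_\sc$ is itself stable: for neighboring samples $S,S'$, \cref{lemma:reg_distance_bounded} applied to $\regemprisk[S]{\sc}$ and $\regemprisk[S']{\sc}$ — both $\sc$-strongly convex w.r.t.\ $\norm{\cdot}$, with difference $\tfrac1n(\ell(\cdot;z'_i)-\ell(\cdot;z_i))$ whose gradient has dual norm at most $2\lip/n$ by Lipschitzness — gives $\norm{\xstar_{\sc,S}-\xstar_{\sc,S'}}\le4\lip/(n\sc)=O(\lip T/(\sm n))$. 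A triangle inequality over $x_{T,S}\to\xstar_{\sc,S}\to\xstar_{\sc,S'}\to x_{T,S'}$ then yields $\norm{x_{T,S}-x_{T,S'}}=O(\lip T/(\sm n))$, and Lipschitzness of $\ell$ converts this into $\sup_z\abs{\ell(x_{T,S};z)-\ell(x_{T,S'};z)}=O(\lip^2T/(\sm n))=O(G^2T/(\sm n))$.

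I expect the main obstacle to be bookkeeping rather than conceptual: lining up the logarithmic and constant factors with the prescribed $\sc$ and $N=\max\{1,2\log_2(\sm\diam n/\lip T)\}$. In particular, one must use $T\ge2\log(\sm\diam n/\lip)$ to guarantee $N\le T$ (so that $\sm+\sc=\Theta(\sm)$ and $(1+N/T)^{T/N}\ge2$ both hold), and one must treat the $N=1$ regime separately — there the linear rate only gives the crude $\norm{x_T-\xstar_\sc}\le\diam$, but this is already $O(\lip T/(\sm n))$ because $N=1$ forces $\diam\le\sqrt2\,\lip T/(\sm n)$. A secondary point requiring care is justifying the \emph{last-iterate} (rather than averaged) $O(1/T)$ rate for relatively-smooth Mirror Descent, and, if one prefers not to cite it, establishing the one-step Bregman contraction for the relatively-strongly-convex case directly from the three-point lemma.
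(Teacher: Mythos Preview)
Your proposal is correct and follows essentially the same route as the paper: verify that the update is a Mirror Descent step on $\regemprisk{\sc}$, use \cref{lemma:reg-rel-sc-sm} to get relative smoothness/strong convexity, bound $\breg(\xstar_\sc,x_0)\le\diam^2$, then combine the linear Bregman contraction with \cref{lemma:reg_distance_bounded} and a triangle inequality for stability, and bound the regularized suboptimality plus $\sc(\reg(\xstar)-\reg(x_0))$ for convergence. The only noteworthy deviation is in the convergence step: you invoke the sublinear last-iterate rate $\regemprisk{\sc}(x_T)-\regemprisk{\sc}(\xstar_\sc)\le(\sm+\sc)\breg(\xstar_\sc,x_0)/T$ (via monotonicity), whereas the paper uses the linear-rate bound from \cref{lemma:rel_mirror}(iii) and then lower-bounds $(1+\sc/\sm)^T-1\ge1$; both yield $O(\sc\diam^2)=\otil(\sm\diam^2/T)$, so this is a cosmetic difference. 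Your explicit treatment of the $N=1$ branch is also fine---the paper's inequality $2^{-\sc T/\sm}\le(\lip T/(\sm\diam n))^2$ happens to cover that regime implicitly for the same reason you identify.
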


The following lemmas are used in order to prove \cref{thm:main_result2}.
As we mentioned, adding regularization can impair smoothness (cf.~\cref{sec:non_smooth_regulator}), hence we cannot appeal directly to classical bounds for smooth Mirror Descent.
The next lemma show that adding regularization, although not necessarily smooth, is indeed relatively smooth and strongly convex.
\begin{lemma}\label{lemma:reg-rel-sc-sm}
    Let $f(x)$ be a convex and $\sm$-smooth function w.r.t.~$\norm{\cdot}$. Let $R(x)$ be $1$-strongly convex w.r.t.~$\norm{\cdot}$. Then $f^{(\sc)}(x) \eqdef f(x)+\sc \reg(x)$ for $\sc>0$ is $(\sc+\sm)$-smooth and $\sc$-strongly convex relative to $\reg(x)$.
\end{lemma}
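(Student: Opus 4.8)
The claim is that $f^{(\sigma)}(x) = f(x) + \sigma R(x)$ is $(\sigma+\beta)$-smooth and $\sigma$-strongly convex relative to $R$, given that $f$ is $\beta$-smooth and $R$ is $1$-strongly convex (both w.r.t.\ $\norm{\cdot}$). The plan is to verify the two defining inequalities (\cref{def:rel-sc} and \cref{def:rel-sm} in the excerpt) directly, by rewriting the relevant gaps of $f^{(\sigma)}$ in terms of the Bregman divergence $\breg$.

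First I would handle relative strong convexity. Fix $x,y\in\xdomain$. By convexity of $f$ we have $f(y) \geq f(x) + \nabla f(x)\cdot(y-x)$, and by definition of the Bregman divergence, $R(y) = R(x) + \nabla R(x)\cdot(y-x) + \breg(y,x)$. Adding the first inequality to $\sigma$ times this identity gives
\begin{align*}
    f^{(\sigma)}(y) \geq f^{(\sigma)}(x) + \nabla f^{(\sigma)}(x)\cdot(y-x) + \sigma\breg(y,x),
\end{align*}
which is exactly $\sigma$-strong convexity of $f^{(\sigma)}$ relative to $R$. Note this direction only used plain convexity of $f$, not smoothness.

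Next, relative smoothness. Here I would use the quadratic upper bound from $\beta$-smoothness of $f$: $f(y) \leq f(x) + \nabla f(x)\cdot(y-x) + \tfrac{\beta}{2}\norm{y-x}^2$. The key observation is that $1$-strong convexity of $R$ gives $\breg(y,x) \geq \tfrac12\norm{y-x}^2$, hence $\tfrac{\beta}{2}\norm{y-x}^2 \leq \beta\breg(y,x)$. Combining with the Bregman identity for $R$ as above yields
\begin{align*}
    f^{(\sigma)}(y) \leq f^{(\sigma)}(x) + \nabla f^{(\sigma)}(x)\cdot(y-x) + (\sigma+\beta)\breg(y,x),
\end{align*}
establishing $(\sigma+\beta)$-relative smoothness. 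I do not expect any real obstacle here; the only thing to be careful about is invoking the correct direction of each inequality ($\breg \geq \tfrac12\norm{\cdot}^2$ for the smoothness side, and the exact Bregman identity for $R$ on both sides), and making sure gradients are well-defined on $\xdomain$ so the inner-product expressions make sense — but these are standing assumptions in the section.
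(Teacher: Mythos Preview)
Your proposal is correct and follows essentially the same approach as the paper: both arguments use convexity of $f$ together with the Bregman identity $R(y)=R(x)+\nabla R(x)\cdot(y-x)+\breg(y,x)$ for the relative strong convexity direction, and the $\sm$-smoothness quadratic upper bound combined with $\breg(y,x)\geq\tfrac12\norm{y-x}^2$ (from $1$-strong convexity of $R$) for the relative smoothness direction. The only cosmetic difference is that the paper adds $\sc\breg(y,x)$ to both sides and then unpacks the Bregman definition, whereas you invoke the Bregman identity for $R$ directly; the logic is identical.
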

\begin{proof}
Due to convexity of $f$,
$f(y) \geq f(x) + \nabla f(x) \cdot (y-x).$
Since $\breg(y,x)=\reg(y)-\reg(x)-\nabla \reg(x) \cdot (y-x)$,
\begin{align*}
    f(y) + \sc \reg(y) \geq f(x) + (\nabla f(x)+\sc \nabla \reg(x)) \cdot (y-x) + \sc \reg(x) + \sc \breg(y,x).
\end{align*}
Hence, by the definition of $f^{(\sc)}(x)$,
\begin{align*}
    f^{(\sc)}(y) \geq f^{(\sc)}(x) + \nabla f^{(\sc)}(x) \cdot (y-x) + \sc \breg(y,x),
\end{align*}
and we conclude that $f^{(\sc)}(x)$ is $\sc$-strongly convex relative to $\reg(x)$.
Since $f(x)$ is $\sm$-smooth,
\begin{align*}
    f(y) \leq f(x) + \nabla f(x) \cdot (y-x) + \frac{\sm}{2}\norm{y-x}^2.
\end{align*}
Using the inequality $\breg(y,x) \geq \frac{1}{2}\norm{y-x}^2$ (since $\reg$ is $1$-strongly convex),
\begin{align*}
    f(y) \leq f(x) + \nabla f(x) \cdot (y-x) + \sm \breg(y,x).
\end{align*}
Adding $\sc \breg(y,x)$ to both sides and using the definition of $\breg(y,x)$,
\begin{align*}
    f(y) + \sc \reg(y) - \sc \reg(x) - \sc \nabla \reg(x) \cdot (y-x)
    &\leq f(x) + \nabla f(x) \cdot (y-x) + (\sc+\sm) \breg(y,x).
\end{align*}
Hence, by the definition of $f^{(\sc)}(x)$,
\begin{align*}
    f^{(\sc)}(y)
    &\leq f^{(\sc)}(x) + \nabla f^{(\sc)}(x) \cdot (y-x) + (\sc+\sm) \breg(y,x),
\end{align*}
and we conclude that $f^{(\sc)}(x)$ is $(\sc+\sm)$-smooth relative to $\reg(x)$.
\end{proof}
Hence, one can take advantage of the method of \citet{lu2018relatively} for relatively-smooth convex optimization.
The following lemma is derived from the analysis of \citet{lu2018relatively};
we include a proof in \cref{proof:lemma:rel_mirror} for completeness.
\begin{lemma} \label{lemma:rel_mirror}
    Let $f(x)$ be $\sm$-smooth and $\sc$-strongly convex relative to $\reg(x)$. Then the sequence $\brk[c]{x_t}$ defined by \cref{eq:mirror_step} satisfy:
    \begin{enumerate}[label=(\roman*)]
        \item
        $\brk[c]{f(x_t)}_t$ is monotonically decreasing.
        \item
        Let $\xstar \in \argmin_{x \in \xdomain} f(x)$. For all $t \geq 1$,
        $
            \breg(\xstar,x_t)
            \leq \brk!{1-\frac{\sc}{\sm}}^t \breg(\xstar,x_0).
        $
        \item
        For all $t \geq 1$ and $x \in \xdomain$,
        $
            f(x_t) - f(x)
            \leq \frac{\sc \breg(x,x_0)}{\brk{1+\frac{\sc}{\sm-\sc}}^t-1}.
        $
    \end{enumerate}
\end{lemma}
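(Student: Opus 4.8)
The plan is to reproduce, in the notation of the lemma, the standard relatively-smooth Mirror Descent analysis of \citet{lu2018relatively}, whose engine is a single one-step inequality from which all three parts follow at once. First I would record the first-order optimality condition for the update \cref{eq:mirror_step}: since $\xdomain$ is convex and $x_{t+1}$ minimizes $x \mapsto \nabla f(x_t)\cdot(x-x_t) + \sm\breg(x,x_t)$ over $\xdomain$, we have $\brk1{\nabla f(x_t) + \sm\brk{\nabla\reg(x_{t+1}) - \nabla\reg(x_t)}}\cdot(x - x_{t+1}) \geq 0$ for all $x \in \xdomain$. Plugging in the three-point identity $\brk{\nabla\reg(x_{t+1}) - \nabla\reg(x_t)}\cdot(x - x_{t+1}) = \breg(x,x_t) - \breg(x,x_{t+1}) - \breg(x_{t+1},x_t)$ rewrites this as
\begin{align*}
    \nabla f(x_t)\cdot(x_{t+1} - x) \leq \sm\brk1{\breg(x,x_t) - \breg(x,x_{t+1}) - \breg(x_{t+1},x_t)}.
\end{align*}

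Next I would assemble three ingredients: relative $\sm$-smoothness applied to the pair $(x_t, x_{t+1})$, i.e.\ $f(x_{t+1}) \leq f(x_t) + \nabla f(x_t)\cdot(x_{t+1}-x_t) + \sm\breg(x_{t+1},x_t)$; the displayed bound above, after splitting $\nabla f(x_t)\cdot(x_{t+1}-x_t) = \nabla f(x_t)\cdot(x_{t+1}-x) + \nabla f(x_t)\cdot(x-x_t)$; and relative $\sc$-strong convexity, $f(x) \geq f(x_t) + \nabla f(x_t)\cdot(x-x_t) + \sc\breg(x,x_t)$. The $\breg(x_{t+1},x_t)$ terms cancel and, for every $x \in \xdomain$, one is left with the key one-step inequality
\begin{align*}
    f(x_{t+1}) - f(x) \leq (\sm-\sc)\breg(x,x_t) - \sm\breg(x,x_{t+1}). \tag{$\star$}
\end{align*}

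From $(\star)$ the three claims drop out quickly. For (i), take $x = x_t$, so $\breg(x_t,x_t) = 0$ and $f(x_{t+1}) \leq f(x_t) - \sm\breg(x_t,x_{t+1}) \leq f(x_t)$. For (ii), take $x = \xstar$ and use $f(x_{t+1}) - f(\xstar) \geq 0$ to get $\sm\breg(\xstar,x_{t+1}) \leq (\sm-\sc)\breg(\xstar,x_t)$, then iterate down to $x_0$. For (iii), set $q \eqdef 1 - \sc/\sm \in (0,1)$ (here $\sc < \sm$, as is automatic once relative smoothness and strong convexity hold simultaneously and $\reg$ is not affine), divide $(\star)$ by $\sm$, multiply through by $q^{-(k+1)}$, and sum over $k = 0,\dots,t-1$; the right side telescopes to $\breg(x,x_0) - q^{-t}\breg(x,x_t) \leq \breg(x,x_0)$, while on the left I would invoke the monotonicity from part (i) to bound each $f(x_{k+1}) - f(x)$ below by $f(x_t) - f(x)$ (legitimate since the weights $q^{-(k+1)}$ are positive). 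What remains is $\tfrac{f(x_t)-f(x)}{\sm}\sum_{j=1}^{t}q^{-j} \leq \breg(x,x_0)$; evaluating the geometric sum as $\tfrac{\sm}{\sc}\brk{q^{-t}-1}$ and noting $q^{-1} = 1 + \tfrac{\sc}{\sm-\sc}$ yields exactly the stated bound.

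The argument is entirely routine once $(\star)$ is in hand; the two spots that need a little care are using the constrained, variational-inequality form of optimality rather than ``$\nabla = 0$'' (which matters because $\xdomain$ need not be all of $\R^d$), and keeping the bookkeeping of the geometric weights in part (iii) exact so that the denominator comes out as $\brk{1 + \sc/(\sm-\sc)}^t - 1$ rather than an off-by-one variant. I would also flag that parts (ii) and (iii) use $\sm - \sc \geq 0$, which I would justify in one line from the relative smoothness and strong convexity inequalities together with $\breg \geq 0$.
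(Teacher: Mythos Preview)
Your proposal is correct and follows essentially the same route as the paper's proof. The only cosmetic difference is that the paper packages the optimality step as an invocation of Tseng's Three-Point Property, whereas you derive the identical inequality directly from the variational inequality plus the Bregman three-point identity; from the one-step inequality $(\star)$ onward the two arguments coincide line by line.
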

We are now ready to prove \cref{thm:main_result2}.
\begin{proof}[of \cref{thm:main_result2}]
    We start with showing that $\breg(\xstar_\sc,0) \leq \diam^2$. This inequality will be used for both stability and convergence results.
    By the zero-order optimality of $\xstar_\sc$ and $\xstar$,
    \begin{align*}
        &\emprisk(\xstar_\sc) + \sc \reg(\xstar_\sc)
        \leq \emprisk(\xstar) + \sc \reg(\xstar) \\
        \implies&
        \reg(\xstar) - \reg(\xstar_\sc)
        \geq \frac{\emprisk(\xstar_\sc) - \emprisk(\xstar)}{\sc}
        \geq 0.
    \end{align*}
    From the first-order optimality of $x_0$ which implies $\nabla \reg(x_0) \cdot(x_0-\xstar_\sc) \leq 0$,
    \begin{align*}
        \breg(\xstar_\sc,x_0)
        &= \reg(\xstar_\sc)-\reg(x_0) - \nabla \reg(x_0)\cdot(\xstar_\sc-x_0)
        \leq \reg(\xstar_\sc)-\reg(x_0).
    \end{align*}
    Combining the two inequalities,
    \begin{align} \label{eq:bound_breg_xstar_sc}
        \breg(\xstar_\sc,x_0)
        &\leq \reg(\xstar_\sc)-\reg(x_0)
        \leq \reg(\xstar)-\reg(x_0)
        \leq \diam^2.
    \end{align}
    Secondly we will show that our method in fact performs mirror steps on $\regemprisk{\sc}(x)=\emprisk(x)+\sc \reg(x)$ which is $(\sc+\sm)$-smooth and $\sc$-strongly convex relative to $\reg(x)$ by \cref{lemma:reg-rel-sc-sm}.
     The update step of $\stabregrel$ is
    \begin{align*}
        x_{t+1} = \argmin_{x \in \xdomain} \brk[c]{\nabla \emprisk(x_t) \cdot (x-x_t)+\sm\breg(x,x_t)+\sc \reg(x)}.
    \end{align*}
    Using the definition of $\breg(x,x_t)$,
    \begin{align*}
        &\nabla \emprisk(x_t) \cdot (x-x_t) + \sm \breg(x,x_t) + \sc \reg(x) \\
        &= \nabla \emprisk(x_t) \cdot (x-x_t) + (\sc+\sm) \breg(x,x_t) + \sc \reg(x) - \sc (\reg(x)-\reg(x_t)-\nabla \reg(x_t)\cdot(x-x_t)) \\
        &= (\nabla \emprisk(x_t)+\sc \nabla \reg(x_t)) \cdot (x-x_t) + (\sc+\sm) \breg(x,x_t) + \sc \reg(x_t).
    \end{align*}
    Thus,
    \begin{align*}
        &
        \argmin_{x \in \xdomain} \brk[c]1{\nabla \emprisk(x_t) \cdot (x-x_t) + \sm \breg(x,x_t) + \sc \reg(x)}
        \\
        &= \argmin_{x \in \xdomain} \brk[c]1{\nabla \regemprisk{\sc}(x_t) \cdot (x-x_t) + (\sc+\sm) \breg(x,x_t)}
        ,
    \end{align*}
    which is a mirror descent step (\cref{eq:mirror_step}) for $\regemprisk{\sc}$ with a smoothness of $\sm+\sc$.
    Hence, we can invoke \cref{lemma:rel_mirror}.
    
    Next follows the stability argument.
    Let $S=\brk[b]{z_1,\dots,z_n}$ and $S'=\brk[b]{z_1,\dots,z_{i-1},z'_i,z_{i+1},\dots,z_n}$.
    Let $x_T$ and $\xt_T$ be the outputs of \stabregrel on $S$ and $S'$ respectively. Let $\xstar_\sc = \argmin_{x \in \xdomain} \regemprisk{\sc}(x)$ and $\xtstar_\sc = \argmin_{x \in \xdomain} \regemprisk[S']{\sc}(x)$.
    Using the triangle inequality,
    \begin{align*}
        \norm{x_T-\xt_T}
        &\leq \norm{x_T - \xstar_\sc} + \norm{\xstar_\sc-\xtstar_\sc} + \norm{\xtstar_\sc-\xt_T}.
    \end{align*}
    By \cref{lemma:rel_mirror},
    \begin{align*}
        \breg(\xstar_\sc,x_T)
        &\leq \brk2{1-\frac{\sc}{\sc+\sm}}^T \breg(\xstar_\sc,x_0)
        = \brk2{\frac{1}{1+\frac{\sc}{\sm}}}^T \breg(\xstar_\sc,x_0).
    \end{align*}
    Using the inequality $\brk*{1+\frac{1}{x}}^{x} \geq 2$ for $x \geq 1$, with $\frac{\sm}{\sc} \geq 1$, $\brk!{1+\frac{\sc}{\sm}}^{\sm/\sc} \geq 2$.
    Note that $\frac{\sm}{\sc} \geq 1$ due to our assumption that $2 \log_2 \frac{\sm \diam n}{\lip} \leq T$.
    Thus,
    \begin{align*}
        \breg(\xstar_\sc,x_T)
        &\leq \brk2{\frac{1}{1+\frac{\sc}{\sm}}}^T \breg(\xstar_\sc,x_0)
        \leq 2^{-\frac{\sc T}{\sm}} \breg(\xstar_\sc,x_0).
    \end{align*}
    By $\sc \geq \frac{2 \sm}{T} \log_2 \frac{\sm \diam n}{\lip T}$ and \cref{eq:bound_breg_xstar_sc},
    \begin{align*}
        \breg(\xstar_\sc,x_T)
        &\leq \frac{\lip^2 T^2}{\sm^2 \diam^2 n^2}\breg(\xstar_\sc,x_0)
        \leq \frac{\lip^2 T^2}{\sm^2 n^2}. 
    \end{align*}
    From the strong convexity of $\reg(x)$, $\tfrac{1}{2} \norm{\xstar_\sc-x_T}^2 \leq \breg(\xstar_\sc,x_T)$.
    Hence,
    \begin{align*}
        \norm{\xstar_\sc-x_T}
        &\leq \frac{\sqrt{2} \lip T}{\sm n},
    \end{align*}
    and similarly $\norm{\xt_T-\xtstar_\sc} \leq \frac{\sqrt{2} \lip T}{\sm n}$.
    Now we will bound $\norm{\xstar_\sc-\xtstar_\sc}$.
    Using \cref{lemma:reg_distance_bounded} with $f_1=\regemprisk{\sc}$ ($\sc$-strongly convex since $\reg(x)$ is $1$-strongly convex) and $f_2=\regemprisk[S']{\sc}$,
    \begin{align*}
        \norm{\xstar_\sc - \xtstar_\sc}
        &\leq \frac{2 \dnorm{\ell(\xstar_\sc;z_i')-\ell(\xstar_\sc;z_i)}}{n \sc}
        \leq \frac{4 \lip}{n \sc},
    \end{align*}
    where we have used the fact that $\ell(\cdot,z_i)$ and $\ell(\cdot,z_i')$ are $\lip$-Lipschitz.
    Thus, since $\sc \geq \sm/T$,
    \begin{align*}
        \norm{x_{T}-\xt_{T}}
        &\leq \frac{4 \lip}{n \sc} + \frac{2\sqrt{2}\lip T}{n \sm}
        \leq \frac{(4+2\sqrt{2})\lip T}{n \sm}.
    \end{align*}
    Since $\ell(\cdot,z)$ is $\lip$-Lipschitz, we upper bound the uniform stability,
    \begin{align*}
        \sup_{z \in \Z} \abs{\ell(x_T;z)-\ell(\xt_T;z)}
        &\leq \lip \norm{x_{T}-\xt_{T}}
        \leq \frac{(4+2\sqrt{2})\lip^2 T}{n \sm}
        = O\brk*{\frac{G^2 T}{\sm n}}.
    \end{align*}
    We move on to the convergence of $x_T$.
    \begin{align*}
        \emprisk(x_T)-\emprisk(\xstar)
        &= \regemprisk{\sc}(x_T)-\regemprisk{\sc}(\xstar) + \sc (\reg(\xstar)-\reg(x_T)) \tag{$\regemprisk{\sc}(x) = \emprisk(x)+\sc \reg(x)$} \\
        &\leq \regemprisk{\sc}(x_T)-\regemprisk{\sc}(\xstar_\sc) + \sc (\reg(\xstar)-\reg(x_T)). \tag{$\xstar_\sc = \argmin_{x \in \xdomain} \regemprisk{\sc}(x)$}
    \end{align*}
    Again by \cref{lemma:rel_mirror}, and $T \geq \frac{\sm}{\sc}$ which implies $\brk1{1+\tfrac{\sc}{\sm}}^T \geq 2$,
    \begin{align*}
        \regemprisk{\sc}(x_T)-\regemprisk{\sc}(\xstar_\sc)
        &\leq \frac{\sc \breg(\xstar_\sc,x_0)}{\brk*{1+\frac{\sc}{\sm}}^T-1}
        \leq \sc \breg(\xstar_\sc,x_0).
    \end{align*}
    Thus, using the minimality of $x_0$,
    \begin{align*}
        \emprisk(x_T)-\emprisk(\xstar)
        &\leq \sc (\breg(\xstar_\sc,x_0) + \reg(\xstar)-\reg(x_T))
        \leq \sc (\breg(\xstar_\sc,x_0) + \reg(\xstar)-\reg(x_0)),
    \end{align*}
    and by \cref{eq:bound_breg_xstar_sc} and the definition of $\sc$,
    \begin{align*}
        \emprisk(x_T)-\emprisk(\xstar)
        &\leq 2 \sc \diam^2
        = \otil\brk*{\frac{\sm \diam^2}{T}}. \qedhere
    \end{align*}
\end{proof}

\subsection*{Acknowledgements}

This work has received support from the Israeli Science Foundation (ISF) grant no.~2549/19, the Len Blavatnik and the Blavatnik Family foundation, the Deutsch Foundation, and the Yandex Initiative in Machine Learning.

\bibliographystyle{abbrvnat}
\bibliography{references}

\appendix

\section{Missing Proofs}

\subsection{Proof of \cref{lemma:reg_distance_bounded}} \label{proof:lemma:reg_distance_bounded}
\begin{proof}
    The strong convexity of $f_2$ and $x_2$ being the minimum of $f_2$ implies
    \begin{align*}
        \nabla f_2(x_1)\cdot (x_1-x_2)
        &\geq f_2(x_1) - f_2(x_2) + \frac{\sc}{2} \norm{x_2-x_1}^2
        \geq \frac{\sc}{2} \norm{x_2-x_1}^2.
    \end{align*}
    From first-order optimality of $x_1$, $\nabla f_1(x_1) \cdot (x_1-x_2) \leq 0$. Thus,
    \begin{align*}
        \nabla f_2(x_1) \cdot (x_1-x_2)
        &= \nabla f_1(x_1)\cdot(x_1-x_2) + \nabla h(x_1)\cdot(x_1-x_2)
        \leq \nabla h(x_1)\cdot(x_1-x_2).
    \end{align*}
    Putting the two inequalities, together with H\"older inequality, yields
    \begin{align*}
        \frac{\sc}{2} \norm{x_2-x_1}^2
        &\leq \nabla h(x_1)\cdot(x_1-x_2)
        \leq \dnorm{\nabla h(x_1)} \norm{x_1-x_2}.
    \end{align*}
    Thus, $\norm{x_2 - x_1} \leq \frac{2}{\sc}\dnorm{\nabla h(x_1)}$.
\end{proof}

\subsection{Proof of \cref{lemma:rel_mirror}} \label{proof:lemma:rel_mirror}
The lemma is based on the analysis of \citet{lu2018relatively} (Theorem 3.1) which gives the convergence in terms of $f(x_t)-f(x)$. For \cref{thm:main_result2} we also need convergence in terms of $\frac{\breg(\xstar,x_t)}{\breg(\xstar,x_0)}$ for $\xstar \in \argmin_{x \in \xdomain} f(x)$. Thus, we repeat the argument for completeness.
The proof relies on the following Three-Point Property:
\begin{lemma}[Three-Point Property of \citet{tseng2008accelerated}] \label{lemma:tseng}
    Let $\phi(x)$ be a convex function, and let $\breg(\cdot,\cdot)$ be the Bregman distance for $\reg(\cdot)$. For a given vector $z$, let
    \begin{align*}
        z^+ \eqdef \argmin_{x \in \xdomain} \brk[c]{\phi(x)+\breg(x,z)}.
    \end{align*}
    Then
    \begin{align*}
        \phi(x)+\breg(x,z) \geq \phi(z^+) + \breg(z^+,z) + \breg(x,z^+) \text{ for all $x \in \xdomain$}.
    \end{align*}
\end{lemma}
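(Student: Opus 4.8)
The plan is to derive the inequality directly from the first-order optimality of $z^+$ combined with the convexity of $\phi$, after isolating an exact algebraic identity relating the three Bregman terms. First I would record the optimality condition. Since $z^+$ minimizes the convex map $x \mapsto \phi(x)+\breg(x,z)$ over the convex set $\xdomain$, and since $\nabla_x \breg(x,z) = \nabla\reg(x)-\nabla\reg(z)$, there is a subgradient $g \in \partial\phi(z^+)$ such that
\[
    \brk*{g + \nabla\reg(z^+) - \nabla\reg(z)} \cdot (x - z^+) \geq 0 \qquad \text{for all } x \in \xdomain .
\]

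The crux is then a purely algebraic identity for the three divergences. Expanding each of $\breg(x,z)$, $\breg(z^+,z)$ and $\breg(x,z^+)$ from the definition $\breg(u,v)=\reg(u)-\reg(v)-\nabla\reg(v)\cdot(u-v)$, I would check that every $\reg$-value term cancels and the gradient terms collapse to
\[
    \breg(x,z) - \breg(z^+,z) - \breg(x,z^+) = \brk*{\nabla\reg(z^+) - \nabla\reg(z)} \cdot (x - z^+) .
\]
This identity holds for any convex $\reg$ and is the key observation; once it is in hand, no further structure of $\reg$ is needed.

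Combining the two facts closes the argument. Writing the target gap as $[\phi(x)-\phi(z^+)] + [\breg(x,z) - \breg(z^+,z) - \breg(x,z^+)]$ and substituting the identity, I would lower bound $\phi(x)-\phi(z^+) \geq g\cdot(x-z^+)$ by convexity of $\phi$, which yields
\[
    \phi(x)+\breg(x,z) - \phi(z^+) - \breg(z^+,z) - \breg(x,z^+) \geq \brk*{g + \nabla\reg(z^+) - \nabla\reg(z)}\cdot(x-z^+) \geq 0 ,
\]
the last step being exactly the optimality condition. The only point requiring care is the first step: because $\phi$ is assumed merely convex (not necessarily differentiable) and the minimization is constrained to $\xdomain$, the optimality must be stated in the variational-inequality form above, using a subgradient of $\phi$, so that it remains valid whether $z^+$ lies in the interior or on the boundary of $\xdomain$. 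With that in place, the remaining steps are routine expansion.
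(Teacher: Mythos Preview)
Your proof is correct. The paper does not actually prove this lemma; it merely states it as the Three-Point Property of \citet{tseng2008accelerated} and then invokes it in the proof of \cref{lemma:rel_mirror}. Your argument---first-order optimality of $z^+$ in variational-inequality form, the Bregman three-point identity $\breg(x,z)-\breg(z^+,z)-\breg(x,z^+)=(\nabla\reg(z^+)-\nabla\reg(z))\cdot(x-z^+)$, and the subgradient inequality for $\phi$---is precisely the standard proof, so there is nothing further to compare.
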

\begin{proof}[of \cref{lemma:rel_mirror}]
    For any $x \in \xdomain$ and $t \geq 1$, from relative smoothness,
    \begin{align*}
        f(x_t)
        &\leq f(x_{t-1}) + \nabla f(x_{t-1}) \cdot (x_{t}-x_{t-1}) + \sm \breg(x_t,x_{t-1}).
    \end{align*}
    Using \cref{lemma:tseng} with $\phi(x)=\frac{1}{\sm}\nabla f(x_{t-1})\cdot (x-x_{t-1})$ and $z=x_{t-1}$,
    \begin{align*}
        \frac{1}{\sm}\nabla f(x_{t-1}) \cdot (x-x_{t}) + \breg(x,x_{t-1}) \geq \breg(x_t,x_{t-1}) + \breg(x,x_t).
    \end{align*}
    Hence,
    \begin{align*}
        f(x_t)
        &\leq f(x_{t-1}) + \nabla f(x_{t-1}) \cdot (x-x_{t-1}) + \sm \breg(x,x_{t-1}) - \sm \breg(x,x_{t}).
    \end{align*}
    Thus, from relative strong convexity,
    \begin{align*}
        f(x_t)
        &\leq f(x) + (\sm-\sc) \breg(x,x_{t-1}) - \sm \breg(x,x_{t}).
    \end{align*}
    Note that if $x=x_{t-1}$, $f(x_t) \leq f(x_{t-1})$. Hence, $\brk[c]*{f(x_{t})}_t$ is monotonically decreasing.
    From the definition of $\xstar$, $f(x_t) \geq f(\xstar)$. Hence, for $x=\xstar$ we obtain
    \begin{align*}
        \breg(\xstar,x_t)
        &\leq \frac{\sm-\sc}{\sm} \breg(\xstar,x_{t-1})
        = \brk*{1-\frac{\sc}{\sm}} \breg(\xstar,x_{t-1}).
    \end{align*}
    Repeating this argument and we obtain
    \begin{align*}
        \breg(\xstar,x_t) \leq \brk*{1-\frac{\sc}{\sm}}^t \breg(\xstar,x_0).
    \end{align*}
    Returning to
    \begin{align*}
        f(x_t)
        &\leq f(x) + (\sm-\sc) \breg(x,x_{t-1}) - \sm \breg(x,x_{t}),
    \end{align*}
    it follows by induction that
    \begin{align*}
        \sum_{i=1}^t \brk*{\frac{\sm}{\sm-\sc}}^i \brk*{f(x_i)-f(x)} \leq \sm \breg(x,x_0) - \brk*{\frac{\sm}{\sm-\sc}}^t \sm \breg(x,x_t).
    \end{align*}
    Since $\brk[c]*{f(x_{t})}_t$ is monotonically decreasing,
    \begin{align*}
        \sum_{i=1}^t \brk*{\frac{\sm}{\sm-\sc}}^i \brk*{f(x_i)-f(x)}
        &\geq \sum_{i=1}^t \brk*{\frac{\sm}{\sm-\sc}}^i \brk*{f(x_t)-f(x)} \\
        &= \frac{\sm \brk*{1+\frac{\sc}{\sm-\sc}}^t-1}{\sc} \brk*{f(x_t)-f(x)}.
    \end{align*}
    Thus,
    \begin{align*}
        \frac{\sm \brk*{1+\frac{\sc}{\sm-\sc}}^t-1}{\sc} \brk*{f(x_t)-f(x)}
        &\leq \sm \breg(x,x_0) - \brk*{\frac{\sm}{\sm-\sc}}^t \sm \breg(x,x_t) \\
        &\leq \sm \breg(x,x_0).
    \end{align*}
    Rearranging the terms yields the convergence result.
\end{proof}
\section{On Smoothness of \texorpdfstring{$\ell_p$}{lp} Regularization} \label{sec:non_smooth_regulator}
The following lemma indicate that $\pnorm{\cdot}^2$ for $1 < p < 2$ is not smooth. Hence, using it as a regularization can impair a smoothness assumption.
\begin{lemma} \label{lemma:non_smooth_regulator}
    Let $f(x)=\pnorm{x}^2$ for $1 < p < 2$ over $\xdomain=\R^d$ ($d>1$). Then $f(x)$ is not smooth with respect to $\pnorm{\cdot}$.
\end{lemma}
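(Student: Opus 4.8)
The plan is to exhibit a one-parameter family of point pairs along which the gradient of $f$ changes strictly faster than the distance between the points; this contradicts Lipschitzness of $\nabla f$ with respect to $\pnorm{\cdot}$ (equivalently, with values measured in its dual norm $\dnorm{\cdot}$, the $\ell_q$ norm), and hence rules out any finite smoothness constant $\sm$.

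\textbf{Step 1 (gradient of $f$).} A routine chain-rule computation shows that for $x$ in the non-negative orthant with $x\ne 0$ and $x_i>0$,
$\nabla_i f(x) = 2\,\pnorm{x}^{2-p}\,x_i^{p-1}$.
Since $p>1$, the right-hand side extends continuously by the value $0$ to $x_i=0$, so this closed form is valid throughout the non-negative orthant minus the origin; in particular $f$ is differentiable there, so the smoothness definition applies and $\nabla f$ is the stated vector.

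\textbf{Step 2 (test points).} Using $d>1$, let $e_1,e_2$ be the first two standard basis vectors, and for small $t>0$ set $y=e_1$ and $x_t=e_1+t e_2$ (all remaining coordinates $0$). Then $\pnorm{x_t-y}=\pnorm{t e_2}=t$. By Step 1, $\nabla_2 f(y)=0$, while $\nabla_2 f(x_t)=2\,(1+t^p)^{(2-p)/p}\,t^{p-1}\ge 2\,t^{p-1}$, where the inequality uses $2-p>0$ and $1+t^p\ge 1$. Since the dual ($\ell_q$) norm of a vector is at least the absolute value of any one of its coordinates, $\dnorm{\nabla f(x_t)-\nabla f(y)}\ge \abs{\nabla_2 f(x_t)-\nabla_2 f(y)}\ge 2\,t^{p-1}$.

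\textbf{Step 3 (conclusion).} Combining the two estimates, $\dnorm{\nabla f(x_t)-\nabla f(y)}/\pnorm{x_t-y}\ge 2\,t^{p-2}$, which tends to $+\infty$ as $t\to 0^+$ because $p-2<0$. Therefore no finite $\sm$ satisfies $\dnorm{\nabla f(x)-\nabla f(y)}\le \sm\,\pnorm{x-y}$ for all $x,y\in\R^d$, i.e.\ $f$ is not smooth with respect to $\pnorm{\cdot}$.

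The only step requiring a bit of care — and the one I would write out in full — is the continuous extension of the gradient formula at a point with a vanishing coordinate, e.g.\ verifying directly that $s\mapsto f(e_1+s e_2)=(1+\abs{s}^p)^{2/p}$ is differentiable at $s=0$ with derivative $0$; this is exactly the place where $p>1$ enters. Everything else is an elementary one-variable estimate.
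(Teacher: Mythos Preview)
Your proof is correct and uses exactly the same test points as the paper ($e_1$ versus $e_1+t e_2$). The only difference is which characterization of smoothness you contradict: you compute the gradient explicitly and show $\dnorm{\nabla f(x_t)-\nabla f(y)}/\pnorm{x_t-y}\ge 2t^{p-2}\to\infty$, violating the Lipschitz-gradient definition directly; the paper instead violates the derived quadratic upper bound, computing $f(y)-f(x)-\nabla f(x)\cdot(y-x)=(1+\epsilon^p)^{2/p}-1\ge \tfrac{2}{p}\epsilon^p$ and noting this exceeds $\tfrac{\sm}{2}\epsilon^2$ for small $\epsilon$. Both routes rest on the same one-variable observation that $t\mapsto (1+t^p)^{2/p}$ behaves like $t^p$ rather than $t^2$ near $0$; your version lines up slightly more directly with the paper's own definition of smoothness (Lipschitz gradient), at the cost of one extra chain-rule computation and the care you flag about differentiability at a vanishing coordinate.
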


\begin{proof}
Assume by contradiction that $\reg(x)$ is $\sm$-smooth with respect to $\pnorm{\cdot}$ for some $\sm>0$.
Let $x=(1,0,0, \dots ,0)^T$ and $y=(1,\epsilon,0,\dots,0)^T$ for some $\epsilon>0$.
Thus,
\begin{align*}
    f(y)-f(x)-\nabla f(x) \cdot (y-x)
    &= (1+\epsilon^p)^{2/p}-1-\nabla f(x) \cdot (y-x) \\
    &= (1+\epsilon^p)^{2/p}-1-(1,0,\dots,0)^T \cdot (0,\epsilon,0,\dots,0)^T \\
    &= (1+\epsilon^p)^{2/p}-1.
\end{align*}
Using the identity $(1+x)^\gamma \geq 1+\gamma x$ for $x \geq 0$ and $\gamma \geq 1$, $(1+\epsilon^p)^{2/p}-1 \geq \tfrac{2}{p}\epsilon^p$. Since $p<2$,
\begin{align*}
    \lim_{\epsilon \rightarrow 0^+}\frac{\epsilon^2}{\epsilon^p}=0.
\end{align*}
We can pick sufficiently small $\epsilon$ with
\begin{align*}
    \frac{\epsilon^2}{(1+\epsilon^p)^{2/p}-1} &\leq \frac{p \epsilon^2}{2 \epsilon^{p}} < \frac{2}{\sm},
\end{align*}
for which
\begin{align*}
    f(y)-f(x)-\nabla f(x) \cdot (y-x)
    &= (1+\epsilon^p)^{2/p}-1
    > \frac{\sm \epsilon^2}{2}
    = \frac{\sm}{2}\pnorm{y-x}^2,
\end{align*}
and we get a contradiction to the smoothness assumption.
\end{proof}

\end{document}